\theoremstyle{plain}
\theoremstyle{plain}
\theoremstyle{plain}
\newtheorem{assumption}{Assumption}
\theoremstyle{plain}
\theoremstyle{definition}
\newtheorem{definition}{Definition}
\declaretheorem[style=definition,qed=$\Diamond$]{example}
\newcommand{\groups}{\mathcal{G}}
\newcommand{\group}{g}
\newcommand{\capgroup}{G}
\newcommand{\risk}{\mathcal{R}}
\newcommand{\groupfxn}{r}
\newcommand{\appropto}{\mathrel{\vcenter{
  \offinterlineskip\halign{\hfil$##$\cr
    \propto\cr\noalign{\kern2pt}\sim\cr\noalign{\kern-2pt}}}}}
\DeclareMathOperator*{\argmin}{argmin}
\newcommand*{\medcup}{\mathbin{\scalebox{1.5}{\ensuremath{\cup}}}}%
\newcommand{\newtext}[1]{{#1}}
\newcommand{\yrcite}[1]{\citeyearpar{#1}}
\renewcommand{\cite}[1]{\citep{#1}}
\title{Representation Matters: \\Assessing the Importance of Subgroup Allocations 
in Training Data}
\author{
\and Esther Rolf\thanks{Department of Electrical Engineering and Computer Sciences, University of California, Berkeley}
\and Theodora Worledge\footnotemark[1]
\and Benjamin Recht\footnotemark[1]
\and Michael I. Jordan\footnotemark[1]\,\,\thanks{Department of Statistics, University of California, Berkeley}
}
\date{\today}
\begin{document}

\maketitle

\begin{abstract}
Collecting more diverse and representative training data is often touted as a remedy for the disparate performance of machine learning predictors across subpopulations. However, a precise framework for understanding how dataset properties like diversity affect learning outcomes is largely lacking. By casting data collection as part of the learning process, we demonstrate that diverse representation in training data is key not only to increasing subgroup performances, but also to achieving population-level objectives. Our analysis and experiments describe how dataset compositions influence performance and provide constructive results for using trends in existing data, alongside domain knowledge, to help guide intentional, objective-aware dataset design.

\end{abstract}


\section{Introduction}
\label{sec:intro}

Datasets play a critical role in shaping the perception of performance and progress in machine learning (ML)---the way we collect, process, and analyze data affects the way we benchmark success and form new research agendas \citep{paullada2020data, dotan2019value}.
A growing appreciation of this determinative role of datasets has sparked a concomitant concern that standard datasets used for training and evaluating ML models lack diversity along significant dimensions, for example, geography, gender, and skin type \citep{shankar2017no,buolamwini2018gender}. 
Lack of diversity in evaluation data can obfuscate disparate performance when evaluating based on aggregate accuracy \citep{buolamwini2018gender}. 
Lack of diversity in training data can limit the extent to which learned models can adequately apply to all portions of a population, a concern highlighted in recent work in the
medical domain~\citep{ habib2019impact,hofmanninger2020automatic}.

Our work aims to develop a general unifying perspective on the way that dataset composition affects outcomes of machine learning systems. We focus on \emph{dataset allocations}: the number of datapoints from predefined subsets of the population. 
While we acknowledge that numerical inclusion of groups is an imperfect proxy of representation, we believe that allocations provide a useful initial mathematical abstraction for formulating relationships among diversity, data collection, and statistical risk. We discuss broader implications of our formulation in \cref{sec:limitations_future_work}.  
%

With the implicit assumption that 
the learning task is well specified and performance evaluation from data is meaningful for all groups,
we ask:

\iftoggle{icml}
{\textit{Are group allocations in training data pivotal to performance? To what extent can up-weighting underrepresented groups help, and when might it actually hurt performance?}}
{\begin{enumerate}
\item{\textit{Are group allocations in training data pivotal to performance? To what extent can methods that up-weight underrepresented groups help, and when might upweighting actually hurt performance?}}
\end{enumerate}}

Taking a point of view that data collection is a critical component of the overall machine learning process, we study the effect that dataset composition has on group and population accuracies.
%
This complements work showing that simply gathering more data can mitigate some sources of bias or unfairness in ML outcomes \citep{chen2018my}, a phenomenon which has been observed in practice as well. 
Indeed, in response to the Gender Shades study \citep{buolamwini2018gender}, 
 companies selectively collected additional data to decrease the exposed inaccuracies of their facial recognition models for certain groups, often raising aggregate accuracy in the process \citep{raji2019actionable}. 
Given the potential for targeted data collection efforts to repair unintended outcomes of ML systems, we next ask:

\iftoggle{icml}
{\textit{How can we describe “optimal” data allocations for different learning objectives? Does a lack of diversity in large-scale datasets align with maximizing population accuracy?}}
{\begin{enumerate}
\setcounter{enumi}{1}
\item{\textit{How might we describe “optimal” dataset allocations for different learning objectives? Does the often-witnessed lack of diversity in large-scale datasets align with maximizing population accuracy?}}
\end{enumerate}}

We show that purposeful data collection efforts can proactively support intentional objectives of an ML system, and that diversity and population objectives are often aligned.
Many datasets have recently been designed or amended to exhibit diversity of the underlying population \citep{ryu2017inclusivefacenet, Tschandl2018, yang2020towards}.
\iftoggle{icml}
{These}
{Such endeavors} are significant undertakings, as data gathering and annotation must consider consent, privacy, and power concerns in addition to inclusivity, transparency and reusability  \citep{gebru2018datasheets, jo2020lessons, wilkinson2016fair}. 
Given the importance of more representative and diverse
datasets, and the effort required to create them, our final question asks:

\iftoggle{icml}
{\textit{When and how can we leverage existing datasets to help inform better allocations, towards achieving a diverse set of objectives in a subsequent dataset collection effort?}}
{\begin{enumerate}
\setcounter{enumi}{2}
\item{\textit{When and how can we leverage existing datasets to help inform better allocations, towards achieving diverse objectives in a subsequent dataset collection effort?}}
\end{enumerate}}

{Representation bias, or systematic underrepresentation of subpopulations in data, is one of many forms of bias in ML \citep{suresh2019framework}.
Our work provides a data-focused perspective on the design and evaluation of ML pipelines.}
%
Our main contributions are:
\begin{enumerate}
\item{
We analyze the complementary roles of dataset allocation and algorithmic interventions for achieving per-group and total-population performance (\cref{sec:allocations_and_alternatives}).
Our experiments show that while algorithmically up-weighting underrepresented groups can help, dataset composition is the most consistent determinant of performance (\cref{sec:navigating_tradoffs}).}

\item{We propose a scaling model that describes the impact of dataset allocations on group accuracies (\cref{sec:allocating_samples}).
Under this model, when parameters governing the relative values of within-group data are equal for all groups, the allocation that minimizes \emph{population} risk \emph{overrepresents} minority groups.}
    \item{ We demonstrate that our proposed scaling model captures major trends of the relationship between dataset allocations and performance (\cref{sec:validating_scaling_laws,sec:group_interactions}).
    We evidence that a small initial sample can be used to inform subsequent data collection efforts to, for example, maximize the minimum accuracy over groups without sacrificing population accuracy
    (\cref{sec:informing_future_allocations}).}
\end{enumerate}

\Cref{sec:allocations_and_alternatives,sec:scaling_model} formalize data collection as part of the learning problem and derive results under illustrative settings. Experiments in \cref{sec:experiments} support these results and expose nuances inherent to real-data contexts.
\cref{sec:limitations_future_work} synthesizes results and delineates future work.

\subsection{Additional Related Work}
\label{sec:related_work}

\textbf{Targeted data collection in ML.}
Recent research evidences that targeted data collection can be an effective way to reduce disparate performance of ML models evaluated across sub-populations \citep{raji2019actionable}.
Chen et al.~\yrcite{chen2018my} present a formal argument that the addition of training data can lessen discriminatory outcomes while improving accuracy of learned models, and 
\citet{abernethy2020adaptive} show that adaptively collecting data from the lowest-performing sub-population can increase the minimum accuracy over groups. 
\iftoggle{icml}
{It is important to note, however, there are many complications associated with simply gathering more data as a solution to disparate performance across groups \citep{Jacobs2019measurement,paullada2020data}.}
{At the same time, there are many complexities of gathering data as a solution to disparate performance across groups.
Targeted data collection from specific groups can present undue burdens of surveillance or skirt consent \citep{paullada2020data}. 
When ML systems fail portions of their intended population due to issues of measurement and construct validity, more thorough data collection is unlikely to solve the issue without further intervention \citep{Jacobs2019measurement}.}

With these complexities in mind, we study the importance of numerical representation in training datasets in achieving diverse objectives.
Optimal allocation of subpopulations in statistical survey designs dates back to at least \citet{neyman1934}, 
including stratified sampling methods to ensure coverage across sub-populations \citep{lohr2009sampling}.
For more complex prediction systems,
the field of optimal experimental design \citep{PukelsheimFriedrich2006Odoe} studies what inputs 
are most valuable for reaching a given objective, often focusing on linear prediction functions. 
We consider a constrained sampling structure and directly model the impact of group allocations on subgroup performance for general model classes.

\textbf{Valuing data.}
In economics, allocations indicate a division of goods to various entities \citep{cole2012mechanism}. While we focus on the influence of data allocations on model accuracies across groups, there are many approaches to valuing data. Methods centering on a theory of Shapley valuations \citep{yona2019s, ghorbani2019data} complement studies of the influence of individual data points on model performance to aid subsampling data~\citep{vodrahalli2018all}.

\iftoggle{icml}{\textbf{Handling group-imbalanced data.}}
{\textbf{Methods for handling group-imbalanced data.}}
Importance sampling and importance weighting are standard approaches to addressing class imbalance or small groups sizes \citep{haixiang2017learning,buda2018systematic}, 
though the effects of importance weighting for deep learning may vary with regularization~\citep{byrd2019effect}. 
%
Other methods specifically address differential performance between groups. 
Maximizing minimum performance across groups  
can reduce accuracy disparities
\citep{Sagawa2020Distributionally}
and promote fairer sequential outcomes \citep{hashimoto2018fairness}.
%
For broader classes of group-aware objectives, techniques exist to mitigate unfairness 
or disparate performance 
of black box prediction functions \citep{dwork2018decoupled,kim2019multiaccuracy}.
It might not be clear a priori which subsets need attention;
\citet{sohoni2020no} propose a method to identify and account for hidden strata, while other methods are defined for any subsets \citep{hashimoto2018fairness, kim2019multiaccuracy}. 
\iftoggle{icml}{One can also downsample or augment the input data to match a desired distribution~\citep{chawla2002smote, iosifidis2018dealing}. }
{

In addition to modifying the optimization objective or learning algorithm,
one can also modify the input data itself by re-sampling the training data to match the desired population by downsampling  or by upsampling with data augmentation techniques~\citep{chawla2002smote, iosifidis2018dealing}. }
 

\subsection{Notation}
$\Delta^k$ denotes the $k$-dimensional simplex. $\mathbb{Z}^+$ denotes non-negative integers and $\mathbb{R}^+$ non-negative reals.

\section{Training set allocations and alternatives}
\label{sec:allocations_and_alternatives}
We study settings in which each data instance is associated with a group $g_i$, so that the training set can be expressed as $\mathcal{S} = \{x_i, y_i, g_i\}_{i=1}^n$ where $x_i, y_i$ denote the features and labels of each instance.  We index the discrete \textbf{groups} by integers $\groups = \{1,..,|\groups|\}$, or when we specifically consider just two groups, we write $\groups = \{A, B\}$.
We assume that groups are disjoint and cover the entire population, with $\gamma_\group = P_{(X,Y,\capgroup) \sim \mathcal{D}}[\capgroup = \group]$ denoting the \textbf{population prevalence} of group $\group$, so that $\vec{\gamma} \in \Delta^{|\groups|}$.
%
Groups could represent inclusion in one of many binned demographic categories, or simply a general association with latent characteristics that are relevant to prediction. 
%

For a given population with distribution $\mathcal{D}$ over features, labels, and groups, we are interested in the population level risk, $\risk(\hat{f}({\mathcal{S})}; \mathcal{D}) := \mathbb{E}_{(X,Y,G) \sim \mathcal{D}}[ \ell (\hat{f}(X),Y) ]$, of a predictor $\hat{f}$ trained on dataset $\mathcal{S}$, as well as group specific risks.  
Denoting the \textbf{group distributions} by $\mathcal{D}_\group$, defined as conditional distributions, via
${{P_{(X,Y) \sim \mathcal{D}_\group}[X=x,Y=y]}} = P_{(X,Y, \capgroup )\sim \mathcal{D}}[X=x,Y=y, \capgroup = \group] / \gamma_\group$,
the population risk decomposes as a weighted average over group risks:
\begin{align}
\label{eq:err_per_group}
    \risk(\hat{f}(\mathcal{S}); \mathcal{D}) &= \sum_{\group \in \groups} \gamma_\group \cdot \risk(\hat{f}(\mathcal{S}); \mathcal{D}_\group) ~.
\end{align}
%
In \cref{sec:IW_GDRO} we will assume that the loss $\ell(\hat{y},y)$ is a separable function over data instances.  While this holds for many common loss functions,
some objectives do not decouple in this sense~\citep[e.g., group losses and associated classes of fairness-constrained objectives; see][]{dwork2018decoupled}.
We revisit this point in \cref{sec:experiments,sec:limitations_future_work}. 

\subsection{Training Set Allocations}
In light of the decomposition of the population-level risk as a weighted average over group risks
in Eq.~\eqref{eq:err_per_group}, we now consider the composition of fixed-size training sets, in terms of how many samples come from each group.
%

\begin{definition}[Allocations]
\label{def:allocations}
Given a dataset of $n$ triplets, $\{x_i,y_i, \group_i \}_{i=1}^{n}$, the \textbf{allocation} $\vec{\alpha}  \in \Delta^{|\groups|}$ describes the relative proportions of each group in the dataset: 
\begin{align}
    \label{eq:allocation_def}
    \alpha_{\group} := \tfrac{1}{n}\sum_{i=1}^n \mathbb{I}[\group_i = \group], \quad \group \in \groups.
\end{align}
\end{definition}
It will be illuminating to consider $\vec{\alpha}$ not only as a property of an existing dataset, but as a parameter governing dataset construction, as captured in the following definition. 

\begin{definition}[Sampling from allocation $\vec{\alpha}$]
\label{def:sampling_def}
Given the sample size $n$, group distributions $\{\mathcal{D}_\group\}_{\group \in \groups}$, and allocation $\vec{\alpha} \in \Delta^{|\groups|}$, such that
$ {n_\group :=}{}
\alpha_\group n \in \mathbb{Z}^+, \forall \group \in \groups$, to \textbf{sample from allocation $\vec{\alpha}$} is procedurally equivalent to independent sampling of $|\groups|$ disjoint datasets $\mathcal{S}_\group$ and concatenating:
{\begin{align}
&\mathcal{S}(\vec{\alpha},n) = \underset{\group \in \groups}{\medcup} \mathcal{S}_\group \\ 
&\mathcal{S}_\group = \{ x_i, y_i, \group\}_{i=1}^{n_\group}, \quad (x_i, y_i) \sim_{i.i.d.} \mathcal{D}_\group~. \nonumber
\end{align}}
\iftoggle{icml}{}
{For $\vec{\alpha}$ not satisfying the requirement that $\alpha_\group n$ is integral, we could randomize the fractional allocations, or take $n_\group = \lfloor \alpha_\group n \rfloor$, reducing the total number of samples to $\sum_\group \lfloor \alpha_\group n \rfloor$.}
In the following sections we will generally allow allocations with $n_\group \not\in \mathbb{Z}$, assuming that the effect of up to $|\groups|$ fractionally assigned instances is negligible for large  $n$.
\end{definition}
The procedure in \cref{def:sampling_def} suggests formalizing data collection as a component of the learning process in the following way: in addition to choosing a loss function and method for minimizing the risk, choose the relative proportions at which to sample the groups in the training set:
\begin{align*}
   \vec{\alpha}^* = \argmin_{\vec{\alpha} \in \Delta^{|\groups|}} \min_{\hat{f} \in \mathcal{F}} \mathcal{R}\left(\hat{f}\left({\mathcal{S}(\vec{\alpha},n)}\right); \mathcal{D}\right).
\end{align*}
In Section~\ref{sec:allocating_samples}, we show that when a dataset curator can design dataset allocations in the sense of \cref{def:sampling_def}, they have the opportunity to improve accuracy of the trained model.
\iftoggle{icml}{}{
Of course, one 
does not always have the opportunity to collect new data or 
modify the composition of an existing dataset. }
\cref{sec:IW_GDRO} considers methods for using fixed datasets that have groups with small training set allocation $\alpha_\group$, relative to $\gamma_\group$, or high risk for some groups relative to the population.

\iftoggle{icml}
{\subsection{Accounting for Small Group Allocations}}
{\subsection{Accounting for Small Group Allocations in Fixed Datasets}}
\label{sec:IW_GDRO}
In classical \textbf{empirical risk minimization} (ERM), one learns a function from class $\mathcal{F}$ that minimizes average prediction loss over the training instances $(x_i,y_i,\group_i) \in \mathcal{S}$ (we also abuse notation and write $i \in \mathcal{S}$) with optional regularization $R$:
\begin{align*}
    \hat{f}(\mathcal{S}) = \argmin_{f \in \mathcal{F}} \sum_{i \in \mathcal{S}} \ell(f(x_i), y_i) + R(f, \mathcal{S}).
\end{align*}
There are many methods for addressing small group allocations in data (see \cref{sec:related_work}). Of particular relevance to our work are objective functions that  minimize group or population risks. In particular, one approach is to use \textbf{importance weighting} (IW) to re-weight training samples with respect to a target distribution defined by $\vec{\gamma}$:
\iftoggle{icml}
{\begin{align*}
    \hat{f}^{\scriptscriptstyle \textrm{IW}}
    (\mathcal{S}) = \argmin_{f \in \mathcal{F}} \sum_{\group \in \groups} \frac{\gamma_\group}{\alpha_\group} \bigl( \sum_{i \in 
    \mathcal{S}_\group
    }  \ell(f(x_i), y_i)\bigr) + R(f, \mathcal{S}).
\end{align*}}
{\begin{align*}
    \hat{f}^{\scriptscriptstyle \textrm{IW}}
    (\mathcal{S}) = \argmin_{f \in \mathcal{F}} \sum_{\group \in \groups} \frac{\gamma_\group}{\alpha_\group} \left( \sum_{i \in 
    \mathcal{S}_\group
    }  \ell(f(x_i), y_i)\right) + R(f, \mathcal{S}).
\end{align*}}
%
This empirical risk with instances weighted by ${\gamma_\group}/{\alpha_\group} = {\gamma_\group} n/ {n_\group}$  is an unbiased estimate of the population risk, up to regularization.
While unbiasedness is often desirable, IW can induce high variance of the estimator when $\gamma_\group/\alpha_\group$ is large for some group \citep{cortes2010learning}, which happens when group $g$ is severely underrepresented in the training data relative to their population prevalence.

Alternatively, \textbf{group distributionally robust optimization} (GDRO) \citep{hu2018does,Sagawa2020Distributionally} minimizes the maximum empirical risk over all groups:
\iftoggle{icml}
{\begin{align*}
     \hat{f}^{\scriptscriptstyle \textrm{GDRO}}(\mathcal{S}) = \argmin_{f \in \mathcal{F}} \max_{\group \in \groups}  
     \bigl(
     \tfrac{1}{n_\group}\sum_{i \in 
    \mathcal{S}_\group
     }  \ell(f(x_i), y_i) + R(f, \mathcal{S}_\group) \bigr).
\end{align*}}
{\begin{align*}
    \hat{f}^{\scriptscriptstyle \textrm{GDRO}}(\mathcal{S}) = \argmin_{f \in \mathcal{F}} \max_{\group \in \groups}  
     \left(
     \tfrac{1}{n_\group}\sum_{i \in 
    \mathcal{S}_\group
     }  \ell(f(x_i), y_i) + R(f, \mathcal{S}_\group) \right)~.
\end{align*}}
For losses $\ell$ which are continuous and convex in the parameters of $f$, the optimal GDRO solution corresponds to the minimizer of a group-weighted objective:
$ \tfrac{1}{n} \sum_{i=1}^n w(\group_i) \cdot \ell (f(x_i), y_i)$,
though this is not in general true for nonconvex losses~\citep[see Prop.\ 1 of][and the remark immediately thereafter]{Sagawa2020Distributionally}.

Given the correspondence of GDRO (for convex loss functions) and IW to the optimization of group-weighted ERM objectives, we now investigate the joint roles of sample allocation and group re-weighting for estimating group-weighted risks.
For prediction function $f$, loss function $\ell$, and group weights $w: \groups \rightarrow \mathbb{R}^+$, let $\hat{L}(w, \alpha, n;f, \ell)$ be the random variable defined by:
\begin{align*}
    \hat{L}(w, \alpha, n;f, \ell) &:= \tfrac{1}{n}\sum_{i \in \mathcal{S}(\vec{\alpha}, n)} w(g_i) \cdot \ell(f(x_i),y_i) ~,
\end{align*}
where the randomness in $\hat{L}$ comes from the draws of $x_i, y_i$ from  $\mathcal{D}_{\group_{i}}$ according to procedure $\mathcal{S}(\vec{\alpha},n)$ (\cref{def:sampling_def}), as well as any randomness in $f$. 

The following proposition shows that group weights and allocations play complementary roles in risk function estimation. In particular, if  $w(\group)$ depends on the sampling allocations $\alpha_g$, then there are alternative group weights $w^*$ and allocation $\vec{\alpha}^*$ such that the alternative estimator has the same expected value but lower variance.

{\begin{restatable}[Weights and allocations]{proposition}{weightsallocations}
\label{prop:weights_and_allocations}
For any loss $\ell$, prediction function $f$ and group distributions $\mathcal{D}_\group$, there exist weights with
$w^{*}(\group) 
\propto \left(Var_{(x,y)\sim\mathcal{D}_\group}[\ell(f(x), y))]\right)^{-1/2}$
such that for any triplet $(\vec{\alpha}, w, n)$ with $\sum_\group \alpha_\group w(g) >0 $, if $w\not\appropto w^*$,\footnote{We use the symbol $\not\appropto$ to denote ``not approximately proportional to." The approximately part of this relation stems from finite and integer sample concerns; for example, the proposition holds if we consider $w \not\appropto w^*$ to mean $\exists \group \in \groups: | 1 - \frac{w(\group)}{w^*(\group)}| > \frac{|\groups|}{\alpha_\group n}$. }
\iftoggle{icml}{there exists an alternative $\vec{\alpha}^*$ with}
{there exists an alternative allocation $\vec{\alpha}^*$ such that}
\begin{align*}
    \mathbb{E}[\hat{L}( w^*,\vec{\alpha}^*, n; f, \ell )] & = \mathbb{E}[\hat{L}(w, \vec{\alpha}, n; f, \ell)] \\
    Var[\hat{L}( w^*, \vec{\alpha}^*,n; f, \ell)] & < Var[\hat{L}( w, \vec{\alpha}, n; f, \ell)]~.
\end{align*}
If $w(\group) > w^*(\group)$, $\alpha^*_\group > \alpha_\group$ and if $w(\group) < w^*(\group)$, $\alpha^*_\group < \alpha_\group$.
\end{restatable}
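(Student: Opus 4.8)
\emph{Overview and Step 1 (moments).} The plan is to reduce the proposition to the classical Neyman-allocation argument via Cauchy--Schwarz. Write $\mu_\group := \mathbb{E}_{(x,y)\sim\mathcal{D}_\group}[\ell(f(x),y)]$ and $\sigma_\group^2 := Var_{(x,y)\sim\mathcal{D}_\group}[\ell(f(x),y)]$. Under the sampling model of \cref{def:sampling_def} the $n_\group=\alpha_\group n$ summands from group $\group$ are i.i.d.\ and the $|\groups|$ groups are mutually independent, so (with $f$ held fixed, which is the sense in which $w^*$ is defined)
\[
\mathbb{E}\bigl[\hat L(w,\vec\alpha,n;f,\ell)\bigr] = \sum_{\group\in\groups}\alpha_\group w(\group)\,\mu_\group,
\qquad
Var\bigl[\hat L(w,\vec\alpha,n;f,\ell)\bigr] = \tfrac1n\sum_{\group\in\groups}\alpha_\group w(\group)^2\sigma_\group^2 .
\]

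\emph{Step 2 (reparametrize and optimize the allocation).} The key move is to set $\beta_\group := \alpha_\group w(\group)\ge 0$. The mean depends on $(\vec\alpha,w)$ only through $\vec\beta$, while the variance is $\tfrac1n\sum_\group \beta_\group^2\sigma_\group^2/\alpha_\group$. Holding $\vec\beta$ fixed — which automatically preserves the mean — I would minimize the variance over $\vec\alpha\in\Delta^{|\groups|}$. Cauchy--Schwarz gives
\[
\Bigl(\textstyle\sum_\group \beta_\group\sigma_\group\Bigr)^2 \;=\; \Bigl(\textstyle\sum_\group \sqrt{\alpha_\group}\cdot\tfrac{\beta_\group\sigma_\group}{\sqrt{\alpha_\group}}\Bigr)^2 \;\le\; \Bigl(\textstyle\sum_\group\alpha_\group\Bigr)\Bigl(\textstyle\sum_\group\tfrac{\beta_\group^2\sigma_\group^2}{\alpha_\group}\Bigr),
\]
with equality iff $\alpha_\group\propto\beta_\group\sigma_\group$; hence the \emph{unique} minimizer is $\alpha^*_\group = \beta_\group\sigma_\group\big/\sum_\groupalt\beta_\groupalt\sigma_\groupalt$, and at this allocation the implied weights are $w^*(\group) = \beta_\group/\alpha^*_\group \propto \sigma_\group^{-1}$ — exactly the claimed form.

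\emph{Step 3 (assemble).} Given $(\vec\alpha,w,n)$, take $\vec\alpha^*$ as in Step 2 using the original $\vec\beta$; then $\alpha^*_\group w^*(\group)=\beta_\group$ for all $\group$, so the means coincide. If $w\not\appropto w^*$ then $\vec\alpha\ne\vec\alpha^*$ (equality would force $w\propto w^*$), so strictness of the Cauchy--Schwarz equality condition gives $Var[\hat L(w^*,\vec\alpha^*,n)]<Var[\hat L(w,\vec\alpha,n)]$. For the directional statement, take the representative $w^*(\group):=\bigl(\sum_\groupalt\alpha_\groupalt w(\groupalt)\sigma_\groupalt\bigr)/\sigma_\group$ of the class $\{\,\cdot\propto\sigma_\group^{-1}\,\}$ (the one consistent with $\sum_\group\alpha^*_\group=1$); a one-line computation then yields $\alpha^*_\group/\alpha_\group = w(\group)/w^*(\group)$, so $\alpha^*_\group>\alpha_\group\Leftrightarrow w(\group)>w^*(\group)$ and likewise with the inequalities reversed.

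\emph{Bookkeeping and the one subtlety.} The argument is short, so what remains is care with edge cases rather than a real obstacle. I would use the hypothesis $\sum_\group\alpha_\group w(\group)>0$ to ensure $\vec\beta\ne\vec 0$, hence $\sum_\group\beta_\group\sigma_\group>0$ and $\vec\alpha^*$ well-defined; handle groups with $\sigma_\group=0$ separately (they contribute nothing to either moment and are the degenerate ``$w^*(\group)=\infty$'' case that the ``approximately proportional'' reading absorbs); and note that $n_\group=\alpha^*_\group n$ need not be integral, which is the origin of the slack made precise in the footnote — a coordinatewise $O(|\groups|/n)$ perturbation of $\vec\alpha^*$ changes the variance negligibly, so the strict improvement survives rounding exactly when $w$ is bounded away from $w^*$ by more than that threshold. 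The genuine subtlety — not an obstacle, but the point to state carefully — is that the normalization of $w^*$ used in the directional claim is triplet-dependent, which is consistent with the statement only committing $w^*$ to the proportionality class $\{\,\cdot\propto\sigma_\group^{-1}\,\}$.
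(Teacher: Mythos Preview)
Your proposal is correct and follows essentially the same approach as the paper: both fix the products $\beta_\group=\alpha_\group w(\group)$ (the paper writes these as $c\gamma'_\group$ via an induced ``target distribution'' $\vec\gamma'$), observe that the mean depends only on $\vec\beta$, and then minimize the variance $\tfrac1n\sum_\group \beta_\group^2\sigma_\group^2/\alpha_\group$ over $\vec\alpha\in\Delta^{|\groups|}$ subject to that constraint, arriving at the Neyman allocation and the directional claim via $w^*(\group)\alpha^*_\group=w(\group)\alpha_\group$. The only cosmetic difference is that you invoke Cauchy--Schwarz explicitly for the optimization and its uniqueness, whereas the paper simply states the minimizer; your handling of the integer-rounding slack and the $\sigma_\group=0$ edge case is also slightly more explicit than the paper's.
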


\iftoggle{icml}{\begin{proof} 
\newtext{(Sketch; full proof appears in \cref{sec:weights_and_allocations_proof}).
 For any deterministic weighting function $w : \groups \rightarrow \mathbb{R}^+$, there exists a  vector $\vec{\gamma}' \in \Delta^{|\groups|}$ with  $\gamma'_\group \propto w(\group) \alpha_\group $ such that
\begin{align*}
    \mathbb{E}[\hat{L}( w,\vec{\alpha}, n; f, \ell )]
    = c \cdot \mathbb{E}_{\group}\mathbb{E}_{(x,y) \sim \mathcal{D}_\group} [\ell( f(x), y)]&,
\end{align*}%
where $\group \sim \textrm{Multinomial}(\gamma')$ and 
 $c= \sum_\group \alpha_\group w(\group)$. For any fixed  $f$ and any  ``target distribution" defined by $\gamma'$, the ($\vec{\alpha}^*, w^*$) pair which minimizes the variance of the estimator, constrained so that $ w(\group)\alpha_\group = c \gamma'_\group \, \forall g$,  has weights $w^*$ with form given above. 
Since the original $(\alpha,w)$ pair satisfies this constraint, the pair $(\alpha^*,w^*)$ must satisfy
$Var[\hat{L}( w^*, \vec{\alpha}^*,n; f, \ell)] \leq Var[\hat{L}( w, \vec{\alpha}, n; f, \ell)]$, while the constraint ensures that $\mathbb{E}[\hat{L}( w^*,\vec{\alpha}^*, n; f, \ell )] = \mathbb{E}[\hat{L}(w, \vec{\alpha}, n; f, \ell)]$. 
}
\end{proof}}
{\begin{proof} (Sketch; the full proof appears in \cref{sec:weights_and_allocations_proof}.)
For any deterministic weighting function $w : \groups \rightarrow \mathbb{R}^+$, there exists a  vector $\vec{\gamma}' \in \Delta^{|\groups|}$ with entries $\gamma'_\group \propto w(\group) \alpha_\group $ such that
\begin{align*}
    \mathbb{E}[\hat{L}( w,\vec{\alpha}^*, n; f, \ell )]
    = \tfrac{1}{n} \sum_{(x_i,y_i,\group_i) \in \mathcal{S}} \mathbb{E}[w(\group_i) \ell( f(x_i), y_i)] = c \cdot \mathbb{E}_{\group \sim \textrm{Multinomial}(\gamma')}\mathbb{E}_{(x,y) \sim \mathcal{D}_\group} [\ell( f(x), y)]
\end{align*}
for constant
 $c= \sum_\group \alpha_\group w(\group)$. For any fixed  $f$ and any  ``target distribution" defined by $\gamma'$, the ($\vec{\alpha}^*, w^*$) pair which minimizes the variance of the estimator, constrained so that $ w(\group)\alpha_\group = c \gamma'_\group \, \forall g$,  has weights $w^*$ with form given above. 
Since the original $(\alpha,w)$ satisfy this constraint, the minimizing pair $(\alpha^*,w^*)$ must satisfy
$\mathbb{E}[\hat{L}( w^*,\vec{\alpha}^*, n; f, \ell )] = \mathbb{E}[\hat{L}(w, \vec{\alpha}, n; f, \ell)]$ , and 
$Var[\hat{L}( w^*, \vec{\alpha}^*,n; f, \ell)] \leq Var[\hat{L}( w, \vec{\alpha}, n; f, \ell)]$. %
\end{proof}}
Since the estimation of risk functions is a key component of learning, \cref{prop:weights_and_allocations}
illuminates
an interplay 
between the roles of sampling allocations and  group-weighting schemes like IW and GDRO. 
When allocations and weights are jointly maximized, the optimal allocation accounts for an implicit target distribution $\gamma'$  (defined above), which may vary by objective function. The optimal weights account for per-group variability $Var_{(x,y)\sim\mathcal{D}_\group}[\ell(f(x), y))]$.
In \cref{sec:experiments} we find that it can be advantageous to use IW and GDRO when some groups have small $\alpha_\group/\gamma_\group$; though the boost in accuracy is less than having an optimally allocated training set to begin with, and diminishes when all groups are appropriately represented in the training set allocation. 

\section{Allocating samples to minimize population-level risk}
\label{sec:allocating_samples}
\label{sec:scaling_model}
Having motivated the importance of group allocations, 
we now investigate the direct effects of training set allocations on group and population risks. 
Using a model of per-group performance as a function of allocations, we study the optimal allocations under a variety of settings.

\subsection{A Per-group Power-law Scaling Model}  
We model the impact of allocations on performance with scaling laws that describe per-group risks as a function of the number of data points from their respective group, as well as the total number of training instances. 
\begin{assumption}[Group risk scaling with allocation]
\label{ass:scaling}
The group risks
$\mathcal{R}(\hat{f};\mathcal{D}_\group) := \mathbb{E}_{(x,y) \sim {\mathcal{D}_\group}} [\ell(\hat{f}(x), y) ]$
scale approximately as the sum of inverse power functions on the number of samples from group $g$ and the total number of samples. That is, $\exists~ M_\group > 0$, $\sigma_\group, \tau_\group, \delta_\group  \geq 0$, and  $p,q>0 $ such that for a learning procedure which returns predictor $\hat{f}(\mathcal{S})$, and training set $\mathcal{S}$ with group sizes $n_\group\!\geq\! M_\group$:
\iftoggle{icml}
{\begin{align}
\label{eq:scaling}
  &   \mathcal{R}\bigl(\hat{f}\bigl(\mathcal{S}(\vec{\alpha},n)\bigr);\mathcal{D}_\group\bigr) 
      \approx  \, \groupfxn(\alpha_\group n, n; \sigma_\group, \tau_\group, \delta_\group, p, q) \hspace{.7em}  \forall \,\group \in \groups     \nonumber\\
&\groupfxn(n_\group, n; \sigma_\group, \tau_\group, \delta_\group, p, q)    := 
      \sigma_\group^2 n_\group^{-p} +\tau_\group^2 n^{-q} + \delta_\group ~.
\end{align}}
{\begin{align}
\label{eq:scaling}
     &\mathcal{R}\biggl(\hat{f}\bigl(\mathcal{S}(\vec{\alpha},n)\bigr);\mathcal{D}_\group\biggr) 
  :=
    \mathbb{E}_{(x,y) \sim \mathcal{D}_{\group}}\left[ \ell(\hat{f}(\mathcal{S})(x), y)\right]
      \approx  \, \groupfxn(\alpha_\group n, n; \sigma_\group, \tau_\group, \delta_\group, p, q) \hspace{.7em}  \forall \,\group \in \groups     \nonumber\\
&\groupfxn(n_\group, n; \sigma_\group, \tau_\group, \delta_\group, p, q)    := 
      \sigma_\group^2 n_\group^{-p} +\tau_\group^2 n^{-q} + \delta_\group ~.
\end{align}}
\end{assumption}
\cref{ass:scaling} is  similar to the scaling law in \citet{chen2018my}, but includes a $\tau_\group^2n^{-q}$ term to allow for data from other groups to influence the risk evaluated on group $\group$. 
It additionally requires that the same exponents  $p, q$ apply to each group, an assumption that underpins our theoretical results in~\Cref{sec:allocating_samples}.
%
We examine the extent to which \cref{ass:scaling} holds empirically in~\cref{sec:validating_scaling_laws}, and will modify Eq.~\eqref{eq:scaling} to include group-dependent terms $p_\group, q_\group$ when appropriate. The following examples give intuition into the form of  Eq.~\eqref{eq:scaling}.}

\iftoggle{icml}{\begin{example}}{\begin{example}[Split classifiers per group]}
 \label{ex:2}
 When separate models are trained for each group, using training data only from that group,
 we expect Eq.~\eqref{eq:scaling} to apply with $\tau_\group = 0$ $\forall \group \in \groups$. 
The parameter $p$ could derived through generalization bounds~\citep{boucheron2005theory}, or through modeling assumptions (\cref{ex:linear_model_with_dummies}).%
 \end{example}
\iftoggle{icml}{}{
 It is often advantageous to pool training data to learn a single classifier. In this case, model performance evaluated on group $\group$ will depend on both $n_\group$ and $n$, as the next examples show.}
 
\iftoggle{icml}{\begin{example}}
{\begin{example}[Groups irrelevant to prediction]}
When groups are irrelevant for prediction 
and the model class $\mathcal{F}$ correctly accounts for this, we expect Eq.~\eqref{eq:scaling} to apply with $\sigma_\group = 0$  $\forall \group \in \groups$.
 \end{example}

 \iftoggle{icml}
 { \begin{example}}
 {\begin{example}[Shared linear model with group-dependent intercepts]}
 \label{ex:linear_model_with_dummies}
 Consider a $(d+1)$-dimensional linear model, where two groups, $\{A,B\}$, share a weight vector $\beta$ and  features $x \sim \mathcal{N}(0, \Sigma_x)$, but the intercept varies by group:
 \begin{align*}
     y_i = \beta^\top x_i  + c_A \mathbb{I}[\group_i = A] + c_B \mathbb{I}[\group_i = B] + \mathcal{N}(0,\sigma^2).
 \end{align*}
As we show in \cref{appendix:ols_derivation},
 the ordinary least squares predictor 
 has group risks
$ \mathbb{E}_{(x,y) \sim \mathcal{D}_\group} 
      [
      (x^\top\hat{\beta} + \hat{c}_\group - y)^2 
      ]
      = \sigma^2 \left( 1 + 1/n_\group + O(d/n) \right),$
where the ${1}/{n_\group}$ arises because we need samples from group $\group$ to estimate the intercept $c_\group$, whereas samples from both groups help us estimate $\beta$.
\end{example}
\cref{ex:linear_model_with_dummies} suggests that in some settings, we can relate $\sigma_\group$ and $\tau_\group$ to `group specific' and `group agnostic' model components that affect performance for group $\group$. In general, the relationship between group sizes and group risks can be more nuanced. 
Data from different groups may be correlated, so that samples from groups similar to or different from $g$ have greater effect on $\mathcal{R}(\hat{f};\mathcal{D}_\group)$ (see \cref{sec:group_interactions}). Eq.~\ref{eq:scaling} is meant to capture the dominant effects of training set allocations on group risks and serves as our main structural assumption in the next section, where we study the allocation that minimizes the approximate \emph{population risk}.

\subsection{Optimal (w.r.t. Population Risk) Allocations}

We now study properties of the allocation that minimizes the approximated population risk:
\iftoggle{icml}
{\begin{align}
\label{eq:estimated_pop_risk}
 \hat{\mathcal{R}}(\vec{\alpha}, n):=&\sum_{\group \in \groups} \gamma_\group \groupfxn(\alpha_\group n, n; \sigma_\group, \tau_\group, \delta_\group,p, q) \\
    \approx&  \sum_{\group \in \groups} \gamma_\group \mathcal{R}(\hat{f}(\mathcal{S});\mathcal{D}_\group) = 
     \mathcal{R}(\hat{f}(\mathcal{S});\mathcal{D}). \nonumber
\end{align}}
{\begin{align}
\label{eq:estimated_pop_risk}
 \hat{\mathcal{R}}(\vec{\alpha}, n):=&\sum_{\group \in \groups} \gamma_\group \groupfxn(\alpha_\group n, n; \sigma_\group, \tau_\group, \delta_\group,p, q) 
    \approx  \sum_{\group \in \groups} \gamma_\group \mathcal{R}\left(\hat{f}(\mathcal{S});\mathcal{D}_\group\right) = 
     \mathcal{R}\left(\hat{f}(\mathcal{S});\mathcal{D}\right)~. 
\end{align}}
The following proposition lays the foundation for two corollaries which show that:
\iftoggle{icml}{
(1) when only the population prevalences $\vec{\gamma}$ vary between groups, the allocation that minimizes the approximate \emph{population risk} up-represents groups with small $\gamma_\group$; 
(2) for two groups with different scaling parameters $\sigma_\group$, the optimal allocation of the group with $\gamma_g < \tfrac{1}{2}$ is bounded by functions of $\sigma_A, \sigma_B$, and $\vec{\gamma}$.}
{
\begin{enumerate}
    \item[(1)] when only the population prevalences $\vec{\gamma}$ vary between groups, the allocation that minimizes the approximate \emph{population risk} up-represents groups with small $\gamma_\group$;
    \item[(2)] for two groups with different scaling parameters $\sigma_\group$, the optimal allocation of the group with $\gamma_g < \tfrac{1}{2}$ is bounded by functions of $\sigma_A, \sigma_B$, and $\vec{\gamma}$.
\end{enumerate}
}

\begin{restatable}{proposition}{alphastar}
\label{lem:alpha_star}
Given a population made up of disjoint groups  $\group \in \groups$ 
with population prevalences $\gamma_\group$, 
under the conditions of ~\cref{ass:scaling}, 
the allocation $\vec{\alpha}^* \in \Delta^{|\groups|}$ that minimizes the approximated population risk 
$\hat{\mathcal{R}}$ in eq.~\eqref{eq:estimated_pop_risk}
has elements:
\begin{align}
\label{eq:alpha_star}
\alpha_\group^* = \frac{\left(\gamma_\group \sigma^2_\group\right)^{1/(p+1)}}{\sum_{\group \in \groups} \left(\gamma_\group \sigma^2_\group\right)^{1/(p+1)}} ~.
\end{align}
If $\sigma_\group = 0 \ \forall \group \in \groups$, then any allocation in $\Delta^{|\groups|}$ minimizes $\hat{\mathcal{R}}$.
\end{restatable}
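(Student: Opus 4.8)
The plan is to minimize the smooth convex function $\hat{\mathcal{R}}(\vec\alpha, n)$ over the simplex using Lagrange multipliers. First I would write out the objective explicitly: since $\groupfxn(\alpha_\group n, n; \sigma_\group, \tau_\group, \delta_\group, p, q) = \sigma_\group^2 (\alpha_\group n)^{-p} + \tau_\group^2 n^{-q} + \delta_\group$, the terms $\tau_\group^2 n^{-q}$ and $\delta_\group$ do not depend on $\vec\alpha$, so
\begin{align*}
\hat{\mathcal{R}}(\vec\alpha, n) = n^{-p}\sum_{\group \in \groups} \gamma_\group \sigma_\group^2 \alpha_\group^{-p} + \text{(terms independent of } \vec\alpha).
\end{align*}
Thus minimizing $\hat{\mathcal{R}}$ over $\vec\alpha \in \Delta^{|\groups|}$ is equivalent to minimizing $h(\vec\alpha) := \sum_\group \gamma_\group \sigma_\group^2 \alpha_\group^{-p}$ subject to $\sum_\group \alpha_\group = 1$ and $\alpha_\group \ge 0$.

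Next I would handle the degenerate case first: if $\sigma_\group = 0$ for all $\group$, then $h \equiv 0$ and indeed every allocation is a minimizer, giving the last sentence of the statement. For the main case, I would note that $h$ is strictly convex on the relative interior of the simplex (each $\alpha_\group^{-p}$ is strictly convex for $p > 0$ when $\gamma_\group \sigma_\group^2 > 0$), and that $h \to \infty$ as any $\alpha_\group \to 0^+$ with $\gamma_\group\sigma_\group^2 > 0$, so the minimizer lies in the interior and is unique. Then I would form the Lagrangian $\mathcal{L}(\vec\alpha, \lambda) = \sum_\group \gamma_\group\sigma_\group^2 \alpha_\group^{-p} - \lambda(\sum_\group \alpha_\group - 1)$, set $\partial \mathcal{L}/\partial \alpha_\group = -p\,\gamma_\group\sigma_\group^2\alpha_\group^{-p-1} - \lambda = 0$, and solve for $\alpha_\group \propto (\gamma_\group\sigma_\group^2)^{1/(p+1)}$. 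Normalizing to sum to one yields exactly Eq.~\eqref{eq:alpha_star}. I would close by invoking convexity to confirm this stationary point is the global minimizer.

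One subtlety to address carefully: \cref{ass:scaling} only asserts the scaling form for $n_\group \ge M_\group$, i.e., the approximation $\groupfxn$ is valid away from the boundary of the simplex. Since the candidate minimizer $\alpha^*_\group \propto (\gamma_\group\sigma_\group^2)^{1/(p+1)}$ is a fixed interior point, for $n$ large enough all $n_\group = \alpha^*_\group n \ge M_\group$, so the formula $\hat{\mathcal{R}}$ genuinely approximates the population risk near $\vec\alpha^*$; I would state this caveat explicitly (consistent with the "approximately" language throughout the section) rather than claim an exact global statement. I would also note the edge case where some but not all $\sigma_\group$ vanish: those groups get $\alpha^*_\group = 0$ from the formula, which is consistent — they contribute nothing to $h$ and can be dropped, the minimization proceeding over the remaining groups — though this mild abuse of "interior" is worth a sentence.

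The main obstacle is not the optimization itself — it is a textbook Lagrange/convexity argument — but rather making the interface with the approximate nature of \cref{ass:scaling} rigorous, i.e., being precise about in what sense $\vec\alpha^*$ "minimizes" a quantity that is only defined up to the $\approx$ in Eq.~\eqref{eq:scaling}, and handling the boundary-of-simplex regime where the scaling model is not assumed to hold. I expect the cleanest route is to define $\vec\alpha^*$ as the exact minimizer of the surrogate $\hat{\mathcal{R}}$ (which is what the statement literally says) and keep the connection to true population risk at the heuristic level already established in the surrounding text.
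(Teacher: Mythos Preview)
Your proposal is correct and follows essentially the same route as the paper's proof: drop the $\vec\alpha$-independent terms, handle the all-$\sigma_\group=0$ case, use convexity plus the blow-up at the boundary to justify an interior solution, then solve the KKT/Lagrange stationarity conditions and normalize. Your additional remarks on the $n_\group \ge M_\group$ caveat and the some-but-not-all $\sigma_\group=0$ edge case go slightly beyond the paper's own proof (which simply notes $\alpha_\group^*=0$ for groups with $\sigma_\group=0$) but are consistent with it.
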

The proof of \cref{lem:alpha_star} appears in~\cref{sec:alpha_star_calculation}.
Note that 
$\vec{\alpha}^*$ does not depend on $n$, $\{\tau_\group\}_{\group \in \groups}$, or $q$; this will in general not hold if powers $p_\group$ differ by group.

We now study the form of $\vec{\alpha}^*$ under illustrative settings. 
%
\Cref{cor:same_sigma} shows that 
when the group scaling parameters $\sigma_g$ in Eq.~\eqref{eq:scaling} are equal across groups,
the allocation  that minimizes the approximate \emph{population risk} allocates samples to minority groups at higher than their \emph{population prevalences}. The proof of \cref{cor:same_sigma} appears in~\cref{sec:cor1_proof}.

\begin{restatable}[Many groups with equal $\sigma_\group$]{corollary}{equalsigma}
\label{cor:same_sigma}
When $\sigma_\group = \sigma > 0, \ \forall \group \in \groups$, the allocation that minimizes  
$\hat{\mathcal{R}}$ in Eq.~\eqref{eq:estimated_pop_risk}
satisfies
$\alpha_\group^* \geq \gamma_\group$ for any group with $\gamma_\group \leq \frac{1}{|\groups|}$.
\end{restatable}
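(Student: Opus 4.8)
The plan is to start from the closed form for $\vec{\alpha}^*$ established in \cref{lem:alpha_star} and specialize it. When $\sigma_\group = \sigma > 0$ for every $\group$, the factor $\sigma^2$ cancels between numerator and denominator of Eq.~\eqref{eq:alpha_star} (and the denominator is strictly positive, so $\vec{\alpha}^*$ is well defined), leaving
\[
\alpha_\group^* = \frac{\gamma_\group^{\,r}}{\sum_{\groupalt \in \groups} \gamma_\groupalt^{\,r}}, \qquad r := \tfrac{1}{p+1}.
\]
The structural fact driving the corollary is that $p>0$ forces $r \in (0,1)$, so $t \mapsto t^r$ is strictly concave on $[0,\infty)$.

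Next I would reduce the target inequality to a statement about $\vec\gamma$ alone. If $\gamma_\group = 0$ the claim $\alpha_\group^* \geq \gamma_\group$ is immediate, so assume $\gamma_\group > 0$; then dividing through by $\gamma_\group$ shows $\alpha_\group^* \geq \gamma_\group$ is equivalent to $\gamma_\group^{\,r-1} \geq \sum_{\groupalt \in \groups} \gamma_\groupalt^{\,r}$. The key step is to upper bound the right-hand side uniformly: by Jensen's inequality applied to the concave map $t\mapsto t^r$ and the uniform distribution over the $|\groups|$ groups,
\[
\frac{1}{|\groups|}\sum_{\groupalt \in \groups}\gamma_\groupalt^{\,r} \le \Bigl(\frac{1}{|\groups|}\sum_{\groupalt \in \groups}\gamma_\groupalt\Bigr)^{\!r} = |\groups|^{-r},
\]
using $\sum_\groupalt \gamma_\groupalt = 1$, hence $\sum_{\groupalt}\gamma_\groupalt^{\,r} \le |\groups|^{\,1-r}$.

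It then suffices to verify $\gamma_\group^{\,r-1} \geq |\groups|^{\,1-r}$. Writing $\gamma_\group^{\,r-1} = (1/\gamma_\group)^{1-r}$ and using that $x\mapsto x^{1-r}$ is increasing (as $1-r>0$), this is equivalent to $1/\gamma_\group \geq |\groups|$, i.e.\ $\gamma_\group \leq 1/|\groups|$ --- exactly the hypothesis. Chaining the two bounds gives $\gamma_\group^{\,r-1}\ge|\groups|^{\,1-r}\ge \sum_{\groupalt}\gamma_\groupalt^{\,r}$, which is the claim; one also reads off that equality holds iff all $\gamma_\groupalt$ are equal. I do not expect a genuine obstacle here: the only points requiring care are the trivial edge case $\gamma_\group=0$, noting $\sigma>0$ keeps $\vec{\alpha}^*$ well defined, and making sure the Jensen bound is invoked in the correct (concave, uniform-weights) direction.
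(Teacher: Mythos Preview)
Your argument is correct and follows essentially the same strategy as the paper: start from the closed form in \cref{lem:alpha_star}, use concavity of $t\mapsto t^{1/(p+1)}$ to upper bound the sum of powers, and finish with the hypothesis $\gamma_\group\le 1/|\groups|$. The only cosmetic difference is that you apply Jensen to all $|\groups|$ terms at once, whereas the paper separates the $\group$-th term and applies the concavity bound to the remaining $|\groups|-1$ terms; your packaging is slightly more direct but the content is the same.
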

 
This shows that the allocation that minimizes population risk can differ from the actual population prevalences $\vec{\gamma}$.
%
In fact, \cref{cor:same_sigma} asserts that 
near the allocation $\vec{\alpha} = \vec{\gamma}$, the marginal returns to additional data from group $\group$ are largest for groups with small $\alpha_\group$, enough so as to offset the small weight $\gamma_\group$
in Eq.~\eqref{eq:err_per_group}.
This result provides evidence \textit{against} the idea that small training set allocation to minority groups might comply with minimizing population risk as a result of a small
relative contribution to the population risk.

\textbf{Remark.} A  counterexample shows that $\alpha_\group^* \leq \gamma_\group$ does not hold for all $\group$ with $\gamma_\group > {1}/{|\groups|}$. Take $\vec{\gamma} = [.68,.30,.01,.01]$ and $p=1$;  Eq.~\eqref{eq:alpha_star} gives $\alpha_2^* > 0.3 = \gamma_2 > 1/4$. In general, whether group $\group$ with $\gamma_\group \geq {1}/{|\groups|}$ gets up- or down-sampled depends on the distribution of  $\vec{\gamma}$ across all groups. 

Complementing 
\iftoggle{icml}{the results of}{the investigation of
the role of the population proportions $\vec{\gamma}$
in} \cref{cor:same_sigma}, the next corollary
shows that
\iftoggle{icml}{}{the optimal allocation }$\vec{\alpha}^*$ generally depends on the relative values of $\sigma_\group$ between groups.
Inspecting Eq.~\eqref{eq:scaling} shows that $\sigma_\group $ defines a limit of performance: 
if $\sigma_\group^2$ is large, the only way to make the approximate risk for group $g$
small is to make $n_\group$ large.
\iftoggle{icml}{}{For two groups,
we can bound the optimal allocations $\vec{\alpha}^*$ in terms of $\{\sigma_\group\}_{\group \in \groups}$, and the population proportions $\vec{\gamma}$.}
\iftoggle{icml}{}{We let $A$ be the smaller of the two groups without loss of generality.} 
From Eq.~\eqref{eq:alpha_star}, we know that for two groups, $\alpha^*_A$ is increasing in $\tfrac{\sigma_A}{\sigma_B}$; \Cref{cor:two_groups_unequal_sigma} gives upper and lower bounds on $\alpha^*_A$ in terms of $\sigma_A$ and $\sigma_B$. 
\cref{cor:two_groups_unequal_sigma} is proved in \cref{sec:cor2_proof}.
\begin{restatable}[Unequal per-group constants]{corollary}{twogroupsunequalsigma}
\label{cor:two_groups_unequal_sigma}
For two groups $ \{A,B\}= \groups$ with $\gamma_A\!<\!\gamma_B$, and parameters $\sigma_A, \sigma_B\!>\!0$  in Eq.~\eqref{eq:scaling}, the  allocation of the smaller group $\alpha^*_A$  that minimizes 
$\hat{\mathcal{R}}$ in Eq.~\eqref{eq:estimated_pop_risk}
is upper and lower bounded as
\iftoggle{icml}
{\begin{align*}
        &\frac{\gamma_A ( \sigma_A^2)^{1/(p+1)}}{\gamma_A ( \sigma_A^2)^{1/(p+1)} + \gamma_B(\sigma_B^2)^{1/(p+1)} } < \alpha^*_A \\
      &\hspace{10em}  < \frac{(\sigma_A^2)^{1/(p+1)}}{(\sigma_A^2)^{1/(p+1)} + (\sigma_B^2)^{1/(p+1)}}~.
\end{align*}}
{\begin{align*}
        &\frac{\gamma_A ( \sigma_A^2)^{1/(p+1)}}{\gamma_A ( \sigma_A^2)^{1/(p+1)} + \gamma_B(\sigma_B^2)^{1/(p+1)} } < \alpha^*_A 
       < \frac{(\sigma_A^2)^{1/(p+1)}}{(\sigma_A^2)^{1/(p+1)} + (\sigma_B^2)^{1/(p+1)}}~.
\end{align*}}
When $\sigma_A \geq \sigma_B$, 
     $\alpha_A^* > \gamma_A$, and when 
     $\sigma_A \leq \sigma_B$, $\alpha_A^* < 1/2$.
\end{restatable}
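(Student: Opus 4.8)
The plan is to work directly from the closed form for $\vec{\alpha}^*$ established in \cref{lem:alpha_star}. Specialized to the two-group case, and writing $r := 1/(p+1)$ — which lies in $(0,1)$ because $p>0$ — together with $a := (\sigma_A^2)^{r}$ and $b := (\sigma_B^2)^{r}$, the formula reads
\[
  \alpha_A^* = \frac{(\gamma_A \sigma_A^2)^{r}}{(\gamma_A \sigma_A^2)^{r} + (\gamma_B \sigma_B^2)^{r}} = \frac{\gamma_A^{r} a}{\gamma_A^{r} a + \gamma_B^{r} b}.
\]
Each of the three expressions in the desired chain of inequalities — the lower bound, $\alpha_A^*$, and the upper bound — has the form $x/(x+y)$ with $x,y>0$, and this is strictly increasing in the ratio $x/y$. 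So the corollary reduces to comparing the three ratios $\gamma_A a/(\gamma_B b)$ (lower bound), $(\gamma_A/\gamma_B)^{r}\,(a/b)$ (for $\alpha_A^*$), and $a/b$ (upper bound).

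Next I would record the two elementary facts that do the work: (i) $t \mapsto t/(t+c)$ is strictly increasing on $(0,\infty)$ for any fixed $c>0$; and (ii) for any $t \in (0,1)$ and exponent $r \in (0,1)$ one has $t < t^{r} < 1$. Since $\gamma_A < \gamma_B$ and $\gamma_A + \gamma_B = 1$, we have $\gamma_A/\gamma_B \in (0,1)$ (in particular $\gamma_A < 1/2$), so fact (ii) gives $\gamma_A/\gamma_B < (\gamma_A/\gamma_B)^{r} < 1$. Multiplying by $a/b>0$ orders the three ratios strictly, and fact (i) transports this order to the expressions themselves, yielding
\[
  \frac{\gamma_A a}{\gamma_A a + \gamma_B b} < \alpha_A^* < \frac{a}{a+b},
\]
which is exactly the stated bound once $a$ and $b$ are written back as $(\sigma_A^2)^{1/(p+1)}$ and $(\sigma_B^2)^{1/(p+1)}$.

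Finally, the two special cases fall out of these bounds. If $\sigma_A \ge \sigma_B$ then $a \ge b$, so by monotonicity of $t \mapsto \gamma_A t/(\gamma_A t + \gamma_B b)$ the lower bound is at least $\gamma_A b/(\gamma_A b + \gamma_B b) = \gamma_A$, hence $\alpha_A^* > \gamma_A$; if $\sigma_A \le \sigma_B$ then $a \le b$, so the upper bound is at most $b/(b+b) = \tfrac12$, hence $\alpha_A^* < \tfrac12$. There is no genuine obstacle here: the only point requiring care is the direction of fact (ii) — raising a number in $(0,1)$ to a power in $(0,1)$ \emph{increases} it — and checking that all inequalities remain strict, which they do because $\gamma_A < \gamma_B$ and $r<1$ are strict.
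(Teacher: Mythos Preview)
Your proof is correct and follows essentially the same approach as the paper: start from the closed-form $\alpha_A^*$ of \cref{lem:alpha_star}, then use that raising a number in $(0,1)$ to a power in $(0,1)$ increases it to sandwich $\alpha_A^*$ between the two stated bounds, and finally read off the special cases from those bounds. Your packaging via the monotonicity of $x/(x+y)$ in $x/y$ is a bit more streamlined than the paper's explicit factoring of $\gamma_A$ out of the denominator, but the mathematical content is the same.
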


Altogether, these results highlight key properties of training set allocations that minimize 
population risk.
Experiments in \cref{sec:experiments} give further insight into the values of weights and allocations for minimizing group and population risks and apply the scaling law model in real data settings.

\section{Empirical Results}
\label{sec:experiments}

Having shown the importance of training set allocations from a theoretical perspective, we now provide a complementary empirical investigation of this phenomenon. 
\iftoggle{icml}{}{Throughout our experiments, we use a diverse collection of datasets to give as full a picture of the empirical phenomena as possible (\cref{sec:datasets_short_description,tab:datasets}).} See \cref{sec:experiment_appendix} for full details on each experimental setup.\footnote{\newtext{Code to replicate the experiments is available at
\scriptsize{\url{https://github.com/estherrolf/representation-matters}.}}}

\iftoggle{icml}{
{\begin{table}[h]
\vspace{-1em}
\setlength{\tabcolsep}{3pt}
\renewcommand{\arraystretch}{1.2}
    \centering
        \caption{Brief description of datasets; details in \cref{sec:datasets_longer_desc}. 
        \label{tab:datasets}} 
        \scriptsize
    \begin{tabular}{@{ }llllll@{ }}
   \toprule
        dataset &  groups $\{A,B\}$& $\gamma_A$ &  $\min n_{g}$&  target label & loss metric 
        \\ \hline
        CIFAR-4 & \{animal, vehicle\} & 0.1 &
        10,000  & air & 0/1 loss\\
        ISIC & \{age $\!\geq\!55$, age $\!<\!55$\} & 0.43 & 4,092 & malignant & 1 - AUROC \\
        Goodreads & \{history, fantasy\}  & 0.38 & 50,000 &  book rating & $\ell_1$ loss\\
         Mooc & \{edu $\!\leq\!2^{\circ}$, edu $\!>\!2^{\circ}\}$ & 0.16 & 3,897   & certified  & 1 - AUROC\\
          Adult & \{female, male\} & 0.5 & 10,771  &   income $\!>\!50$K & 0/1 loss\\
          \bottomrule 
    \end{tabular} 
\end{table}}}{}

\iftoggle{icml}{
We use a wide range of datasets 
to give a full empirical characterization of the phenomena of interest (see \cref{tab:datasets}).
The CIFAR-4 dataset is comprised of bird, car, horse, and plane image instances from  CIFAR-10 \citep{krizhevsky2009learning}. The ISIC dataset contains images of skin lesions labelled as benign or malignant  \citep{codella2019skin}.
The Goodreads dataset consists of written book reviews and numerical ratings \citep{DBLP:conf/recsys/WanM18}. The Mooc dataset contains student demographic and participation data \citep{HMdata}. The Adult dataset  consists of demographic data from the 1994 Census \citep{Dua:2019}. For Adult we exclude groups from features (\Cref{sec:groups_as_features}).}{}

\iftoggle{icml}{}{
The first set of experiments (\cref{sec:navigating_tradoffs}) investigates group and population accuracies as a function of the training set allocation $\vec{\alpha}$ by sub-sampling different training set allocations for a fixed training set size.
We also study the amount to which 
importance weighting and group distributionally robust optimization
can increase group accuracies, complementing the results of~\cref{prop:weights_and_allocations} with an empirical perspective. 
The second set of experiments (\cref{sec:validating_scaling_laws}) uses a similar subsetting procedure to examine the fit of the scaling model proposed in \cref{sec:allocating_samples}. 
The third set of experiments (\cref{sec:informing_future_allocations}) investigates using the estimated scaling law fits to inform future sampling practices. We simulate using a small pilot training set to inform targeted dataset augmentation through collecting additional samples.
In \cref{sec:group_interactions}, we probe a setting where we might expect the scaling law to be too simplistic, exposing the need for more nuanced modelling in such settings.}
In contrast to \cref{sec:allocations_and_alternatives},
here losses are defined 
over sets of data; note that AUROC is not separable over  groups, and thus Eq.~\eqref{eq:err_per_group} does not apply for this metric.

\iftoggle{icml}{}{
\subsection{Datasets}
\label{sec:datasets_short_description}
Here we give a brief description of each dataset we use. For more details, see (\cref{tab:datasets,sec:datasets_longer_desc}).

\textbf{Modified CIFAR-4}. To create a dataset where we can ensure class balance across groups, we 
modify the CIFAR-10 dataset \citep{krizhevsky2009learning} by subsetting to the  bird, car, horse, and plane classes. We predict whether the image subject moves primarily by air (plane/bird) or land (car/horse) and group by whether the image contains an animal (bird/horse) or vehicle (car/plane); see \cref{fig:cifar_explanation1}. We set $\gamma=0.9$.

\textbf{ISIC.} The International Skin Imaging Collaboration dataset is a set of labeled images of skin lesions designed to aid development and standardization of imaging procedures for automatic detection of melanomas \citep{codella2019skin}. For our main analysis, we follow similar preprocessing steps to \citep{sohoni2020no}, removing any images with patches.
We predict whether a lesion is benign or malignant, and group instances by the approximate age of the patient of whom each photo was taken.

\textbf{Goodreads.} Given publicly available book reviews compiled from the Goodreads database \citep{DBLP:conf/recsys/WanM18}, we predict the  rating (1-5) corresponding to each review. We group instances by genre.

\textbf{Mooc.} The  HarvardX-MITx Person-Course Dataset contains student demographic and activity data from 2013 offerings of online courses \citep{HMdata}. We predict whether a student earned a certification in the course and we group instances by highest completed level of education.

\textbf{Adult.}  The Adult dataset, downloaded from the  UCI Machine Learning Repository \citep{Dua:2019} and originally taken from the 1994 Census database, has a prediction task of whether an individual's annual income is over $\$50,000$. We group instances by sex, codified as binary male/female, and exclude features that directly determine groups status. 
} 

\iftoggle{icml}{}
{\begin{table*}[]
\setlength{\tabcolsep}{6pt}
\renewcommand{\arraystretch}{1.2}
    \centering
        \caption{Brief description of datasets; details are given in \cref{sec:datasets_longer_desc}. 
        \label{tab:datasets}} 
        \footnotesize
    \begin{tabular}{@{ }lllllllll@{ }}
   \toprule
        dataset &  groups $\{A,B\}$& $\gamma_A$ &  $\min_g n_{g}$& $n_{\textrm{test}}$
        & target label & loss metric & main model used
        \\ \hline
        CIFAR-4 & \{animal, vehicle\} & 0.1 &
        10,000 & 4,000 &  air/land & 0/1 loss & resnet-18 \\
        ISIC & \{age $\geq55$, age $< 55$\} & 0.43 & 4,092 & 2,390 &  benign/malignant &  1 - AUROC & resnet-18 \\
        Goodreads & \{history, fantasy\}  & 0.38 & 50,000 & 25,000 &  book rating (1-5) & $\ell_1$ loss & logistic regression \\
         Mooc & \{edu $\leq 2^{\circ}$, edu $>2^{\circ}$\} & 0.16 & 3,897  & 6,032 &  certified &  1 - AUROC &random forest \\ 
          Adult & \{female, male\} & 0.5 & 10,771  & 16,281 &  income $> \$50$K & 0/1 loss & random forest \\ 
          \bottomrule
    \end{tabular}
\end{table*}}

\subsection{Allocation-aware Objectives vs. Ideal Allocations}
\label{sec:navigating_tradoffs}

\iftoggle{icml}{\begin{table*}[!t]
\iftoggle{icml}
{\renewcommand{\arraystretch}{1.2}}
{\renewcommand{\arraystretch}{1.2}}

    \centering
    \footnotesize
    \caption{\newtext{Estimated scaling parameters for Eq.~\eqref{eq:general_scaling_law}. Parentheses denote standard deviations estimated by the nonlinear least squares fit. Parameters are constrained so that  $\hat{\tau}_\group, \hat{\sigma}_\group, \hat{\delta}_\group \geq 0$ and $\hat{p}_\group, \hat{q}_\group \in [0,2]$.}}
    \label{tab:scaling_fits}
    \begin{tabular}{c|c|c|c|c|c|c|c}
    \toprule
        dataset & 
         $M_g$ &group $\group$ & $\hat{\sigma}_\group$ & $\hat{p}_\group$ & $\hat{\tau}_\group$ & $\hat{q}_\group$ & $\hat{\delta}_\group$  \\ \hline
         \multirow{2}{*}{CIFAR-4} & \multirow{2}{*}{500} & animal & 1.9 (0.12) & 0.47 (9.8e-04) & 4.5e-09 (1.8e+06) & 2.0 (0.0e+00) & 1.1e-03 (8.9e-06) \\ & & vehicle & 1.6 (0.19) & 0.54 (2.0e-03) & 3.2e-12 (1.1e+06) & 2.0 (0.0e+00) & 1.4e-03 (2.8e-06) \\
   \hline
  \multirow{2}{*}{ISIC} & \multirow{2}{*}{200} &  age $\geq 55$ & 0.61 (1.7e-03) & 0.20 (1.1e-03) & 1.7e-09 (1.9e+04) & 1.9 (0.0e+00) & 1.4e-15 (6.1e-04) \\ & &  age $< 55$ & 0.26 (9.3e-04) & 0.13 (0.012) & 0.61 (0.044) & 0.3 (7.5e-03) & 7.5e-11 (7.2e-03)\\
  \hline
    \multirow{2}{*}{Goodreads} & \multirow{2}{*}{2500} & history & 0.16 (1.2e-03) & 0.074 (2.5e-03) & 2.5 (0.058) & 0.37 (2.0e-04) & 0.41 (3.0e-03) \\ & & fantasy & 0.62 (0.69) & 0.020 (1.2e-03) & 3.1 (0.093) & 0.39 (1.9e-04) & 7.2e-21 (0.72)
   \\ \hline
 \multirow{2}{*}{Mooc} & \multirow{2}{*}{50} & edu $\leq 2^{\circ}$ & 0.08 (2.6e-05) & 0.14 (6.0e-03) & 0.73 (0.059) & 0.63 (4.8e-03) & 1.3e-15 (2.6e-04) \\ & & edu $> 2^{\circ}$ & 0.038 (6.2e-04) & 0.068 (6.3e-03) & 0.54 (6.5e-03) & 0.61 (9.8e-04) & 2.8e-12 (8.0e-04)\\
  \hline
 \multirow{2}{*}{Adult }  & \multirow{2}{*}{50} & female & 0.078 (0.051) & 0.018 (3.6e-03) & 0.43 (8.3e-03) & 0.59 (1.6e-03) & 8.0e-16 (0.052) \\ & & male & 0.066 (2.6e-05) & 0.21 (1.2e-03) & 0.47 (6.5e-03) & 0.50 (1.1e-03) & 0.16 (5.4e-06) \\
   \bottomrule
    \end{tabular}
\end{table*}}{}

\iftoggle{icml}{}{\iftoggle{icml}{\newcommand{\uplotscale}{.68}}{\newcommand{\uplotscale}{.31}}
\begin{figure*}[!tbh]
    \begin{subfigure}[b]{\uplotscale\columnwidth}
         \centering
         \includegraphics[width=\textwidth]{figures/exp1:uplot/uplot_cifar_acc_90_10_flipped_wide.pdf}
    \end{subfigure}
    ~
    \begin{subfigure}[b]{\uplotscale\columnwidth}
        \includegraphics[width=\textwidth]{figures/exp1:uplot/uplot_isic_aucroc_flipped_wide.pdf}
    \end{subfigure}
      ~
    \begin{subfigure}[b]{\uplotscale\columnwidth}
        \includegraphics[width=\textwidth]{figures/exp1:uplot/uplot_legend.pdf}
        \vspace{.4em}
    \end{subfigure}
   \\[-1ex]
   \begin{subfigure}[b]{\uplotscale\columnwidth}
        \includegraphics[width=\textwidth]{figures/exp1:uplot/uplot_goodreads_wide.pdf}
    \end{subfigure}
    \hfill
    \begin{subfigure}[b]{\uplotscale\columnwidth}
        \includegraphics[width=\textwidth]{figures/exp1:uplot/uplot_mooc_with_demographics_auc_wide.pdf}
     \end{subfigure} 
     \hfill
     \begin{subfigure}[b]{\uplotscale\columnwidth}
        \includegraphics[width=\textwidth]{figures/exp1:uplot/uplot_adult_acc_no_gender_wide.pdf}
     \end{subfigure}
        \caption{
        \label{fig:uplot_multifig}
        \iftoggle{icml}
        {Performance across $\vec{\alpha}$.
        Shaded regions: one stddev. above/ below mean (10 trials).
        Stars: population minima for each objective. The loss metrics reported (vertical axes) are the same within panels, while the training objectives differ across  solid and dashed lines.}
        {Performance across $\vec{\alpha}$.
        Shaded regions denote one stddev. above and below the mean over 10 trials.
        Stars indicate population minima for each objective (ERM: white, IS/IW:grey, GDRO:black). }
        }
\end{figure*}}

We first investigate (a) the change in group and population performance at different training set allocations, and (b) the extent to which optimizing the three objective functions defined in \cref{sec:IW_GDRO} decreases average and group errors.

For each dataset, we vary the training set allocations $\vec{\alpha}$ \iftoggle{icml}{}{between $(0,1)$ and $(1,0)$}  while fixing the training set size as $n = \min_{\group}  n_{\group}$ (see \cref{tab:datasets}) and evaluate the per-group and population losses on subsets of the heldout test sets.\iftoggle{icml}{\footnote{We pick models and parameters via a cross-validation procedure over a coarse grid of $\vec{\alpha}$; details are given in \cref{sec:experiments_hp}.}}{}
For the image classification tasks, we compare group-agnostic empirical risk minimization (ERM) to importance weighting (implemented via importance sampling (IS) batches following the findings of~\citet{buda2018systematic}) and group distributionally robust optimization (GDRO) with group-dependent regularization as \iftoggle{icml}{}{described }in~\citet{Sagawa2020Distributionally}.
For the non-image datasets, we implement importance weighting (IW) by weighting instances in the loss function during training, and do not compare to GDRO\iftoggle{icml}{.\footnote{The gradient-based algorithm of~\citet{Sagawa2020Distributionally} is not easily adaptable to the predictors we use for these datasets.}}{, as the gradient-based algorithm of~\citet{Sagawa2020Distributionally} is not easily adaptable to the predictors we use for these datasets.}
\iftoggle{icml}{}{We pick hyperparameters for each method based on cross-validation results over a coarse grid of $\vec{\alpha}$; for IS, IW, and GDRO, we allow the hyperparameters to vary with $\vec{\alpha}$; for ERM we choose a single hyperparameter setting for all $\vec{\alpha}$ values.}

\cref{fig:uplot_multifig} highlights the
importance of at least a minimal representation of each group in order to achieve low population loss (black curves) for all objectives. For CIFAR-4, the population loss increases sharply for $\alpha_A < 0.1$ and $\alpha_A > 0.8$, and for ISIC, 
when $\alpha_A < 0.2$.
While not as crucial for achieving low population losses for the remaining datasets, the \emph{optimal} allocations $\vec{\alpha}^*$ (stars)  do require a minimal representation of each group.
The $\vec{\alpha}^*$ are largely consistent across the training objectives (different star colors).
%
The population losses (black curves)  are largely consistent across mid-range values of $\alpha_A$ for all training objectives.
\iftoggle{icml}{This stands}
{The relatively shallow slopes of the black curves for $\alpha_A$ near $\alpha^*_A$ (stars) stand}
in contrast to the per-group losses (blue and orange curves), which can vary considerably as $\vec{\alpha}$ changes.
From the perspective of model evaluation, this reinforces a well-documented need for more comprehensive reporting of performance. 
From the view of  dataset design, 
this exposes an opportunity to choose allocations which optimize diverse evaluation objectives while maintaining low population loss.
Experiments in \cref{sec:informing_future_allocations} investigate this further.

Across the CIFAR-4 and ISIC tasks, GDRO  (dotted curves) is more effective than IS (dashed curves) at reducing per-group losses.
This is expected, as minimizing the largest loss of any group is the explicit objective of GDRO. 
\cref{fig:uplot_multifig} shows that GDRO can also improve the \emph{population loss} (see $\alpha_A > 0.7$ for CIFAR-4 and $\alpha_A < 0.2$ for ISIC)\iftoggle{icml}{.}
{ as a result of minimizing the worst group loss.}
\iftoggle{icml}{IW}{Importance weighting} (dashed curves) has little effect on performance for Mooc and Adult (random forest models), and actually increases the loss for Goodreads (multiclass logistic regression model).
%
%


For all the datasets we study, the advantages of using IS or GDRO are greatest when one group has very small training set allocation ($\alpha_A$ near $0$ or $1$).  When allocations are optimized (stars in \cref{fig:uplot_multifig}), the boost that these methods give over ERM diminishes. In light of \cref{prop:weights_and_allocations}, these results suggest that in practice, part of the value of such  methods is in compensating for sub-optimal allocations. 
We find, however, that
explicitly optimizing the maximum per-group loss with GDRO
can reduce population loss more effectively    
than directly accounting for allocations with IS.

\cref{sec:additional_model_results} shows that similar phenomena hold for different loss functions and models on the same dataset, though the exact $\vec{\alpha}^*$ can differ.
In \cref{sec:groups_as_features}, we show that losses of groups with small $\alpha_\group$ can degrade more severely when group attributes are included in the feature matrix, likely a result of the small number of samples from which to learn group-specific model components (see \cref{ex:linear_model_with_dummies}).

\subsection{Assessing the Scaling Law Fits}
\label{sec:validating_scaling_laws}

\iftoggle{icml}{}{}

For each dataset, we combine the results in \cref{fig:uplot_multifig} with 
extra subsetting runs where we vary both $n_g$ and $n$. 
\iftoggle{icml}{We}{From the combined results, we} use nonlinear least squares to estimate the parameters of modified scaling laws, where exponents can differ by group
\begin{align}
\label{eq:general_scaling_law}
   \textrm{loss}_\group \approx \sigma_\group^2 n_\group^{-p_\group} + \tau_\group^2 n^{-q_\group} + \delta_\group ~.
\end{align}
The estimated parameters of Eq.~\eqref{eq:general_scaling_law} given in \cref{tab:scaling_fits} capture different high-level phenomena across the five datasets.
For CIFAR-4, $\hat{\tau}_g \approx 0$ for both groups, indicating that most of the group performance is explained by $n_g$\iftoggle{icml}{.}{, the number of training instances from that group, whereas the total number of data points $n$, has less influence. }
For Goodreads, both $n_g$ and $n$ have influence in the fitted model, though $\hat{\tau}_\group$ and $\hat{q}_\group$ are larger than $\hat{\sigma}_g$ and $\hat{p}_\group$, respectively.
For ISIC, $\hat{\tau}_A \approx 0$ but  $\hat{\tau}_B \not\approx 0$, suggesting other-group data has little effect on the first group, but is beneficial to the latter.
For 
the non-image datasets (Goodreads, Mooc, and Adult), 
$0\!<\!\hat{\sigma}_\group \!<\!\hat{\tau}_\group$ and $\hat{p}_\group \!<\!\hat{q}_\group$ for all groups.

\iftoggle{icml}{}{
These results shed light on the applicability of the assumptions made in \cref{sec:scaling_model}.}
\cref{fig:empirical_scaling} in \cref{sec:scaling_law_fit_details}  shows that the fitted curves capture the overall trends of per-group losses as a function of $n$ and $n_g$. However, the assumptions of \cref{lem:alpha_star} and \cref{cor:same_sigma,cor:two_groups_unequal_sigma} (e.g., equal $p_\group$ for all $\group \in \groups$) are not always reflected in the empirical fits.
Results in \cref{sec:scaling_model} use Eq.~\eqref{eq:scaling} to describe optimal allocations under different hypothetical settings; we find that
\iftoggle{icml}{}{allowing the scaling parameters vary by group as in} Eq.~\eqref{eq:general_scaling_law} is more realistic in empirical settings.

\newtext{The estimated models describe the overall trends (\cref{fig:empirical_scaling}), but the parameter estimates are variable (\cref{tab:scaling_fits}), indicating that a range of parameters 
can fit the data, a well-known phenomenon in fitting power laws to data~\citep{clauset2009power}. While we caution against absolute or prescriptive interpretations based on the estimates given in \cref{tab:scaling_fits}, if such interpretations are desired~\citep{chen2018my}, we suggest evaluating variation due to subsetting patterns and comparing to alternative models such as log-normal and exponential fits~\citep[cf.][]{clauset2009power}.}

{\begin{figure*}[!hbt]
     \centering
     \includegraphics[width=.9\textwidth]{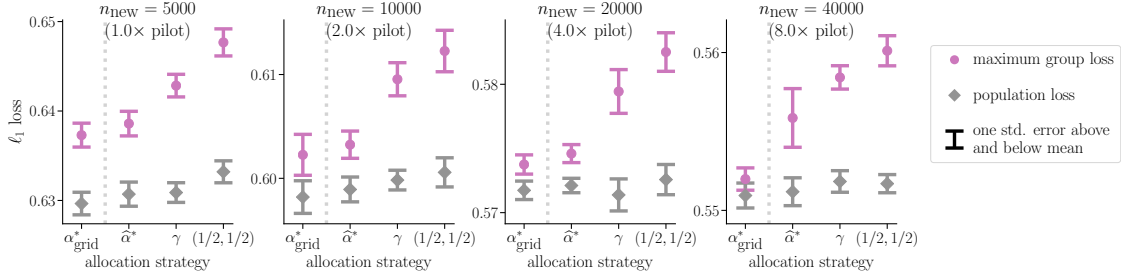}
         \footnotesize 
         \caption{ 
         Pilot sample experiment. Panels show the result of the three  allocations $\vec{\alpha} \in [\hat{\alpha}^*_\textrm{{minmax}}, \vec{\gamma}, (1/2,1/2)]$ for different sizes of the new training sets \newtext{compared with an $\alpha^*_{\textrm{grid}}$ baseline that minimizes the maximum group loss over a grid of resolution 0.01, averaged over the random trials.
         Purple circles indicate average maximum error over groups and grey diamonds indicate average population error.}
         Ranges denote standard errors taken over the 10 trials. \label{fig:pilot_results_goodread}   \label{fig:pilot_results_goodread_all}}
\end{figure*}}
\subsection{Targeted Data Collection with Fitted Scaling Laws}
\label{sec:informing_future_allocations}
We now study the use of scaling models fitted on a small pilot dataset to inform a subsequent data collection effort.
Given the results of \cref{sec:navigating_tradoffs}, we aim to collect a  training set that minimizes the maximum loss on any group.
This procedure goes beyond the descriptive use of the estimated scaling models in \cref{sec:validating_scaling_laws}; 
important considerations for operationalizing these findings are discussed below.


We perform this experiment with the Goodreads dataset, the largest of the five we study.
The pilot sample contains 2,500 instances from each group, drawn at random 
from the full training set.
We estimate the parameters of Eq.~\eqref{eq:general_scaling_law} using a procedure similar to that described in \cref{sec:validating_scaling_laws}.
For a new training set of size $n_{\textrm{new}}$,
we suggest an allocation to minimize the maximum forecasted loss of any group:
\begin{align*}
    \hat{\alpha}^*_\textrm{{minmax}} &= 
    \argmin_{\vec{\alpha} \in \Delta^2} \max_{\group \in \groups}
    \left(\hat{\sigma}_\group^2(\alpha_\group n_{\textrm{new}} )^{-\hat{p}_\group} + \hat{\tau}_\group^2 n_{\textrm{new}}^{-\hat{q}_\group} + \hat{\delta}_\group \right)
    . 
\end{align*}
For \iftoggle{icml}{$n_{\textrm{new}} \in [2\times, 4\times, 8\times]$}{$n_{\textrm{new}} \in [1\times, 2\times, 4\times, 8\times]$}, the pilot sample size, we simulate collecting a new training set by drawing $n_{\textrm{new}}$ fresh samples from the training set with allocation $\vec{\alpha} = \hat{\alpha}^*_\textrm{{minmax}}(n_{\textrm{new}})$.
We train a model on this sample (ERM objective) and evaluate on the test set.
For comparison, we also sample at $\vec{\alpha} = \vec{\gamma}$ (population proportions) and $\vec{\alpha} = (0.5,0.5)$ (equal allocation to both groups). 
We repeat the experiment, starting with the random instantiation of the pilot dataset, for ten trials. \newtext{As a point of comparison, we also compute the results for all $\alpha$ in a grid of resolution $0.01$, and denote the allocation value in this grid that minimizes the average maximum group loss over the ten trials as $\alpha^*_{\textrm{grid}}$.}

%

\iftoggle{icml}{
Among the three allocation strategies we compare, $\hat{\alpha}^*_{\textrm{minmax}}$  minimizes the average maximum loss over groups, across $n_{\textrm{new}}$  (\cref{fig:pilot_results_goodread}).
In contrast, $\hat{\alpha}^*_{\textrm{minmax}}$ does not increase the population loss (grey bars) over that of the other allocation strategies. 
This reinforces the finding of \cref{sec:navigating_tradoffs} 
and provides evidence that we can leverage information from a small initial sample to help raise the minimum accuracy over groups, without sacrificing population accuracy.}
{
Among the three allocation strategies we compare, $\vec{\alpha} = \hat{\alpha}^*_{\textrm{minmax}}$  minimizes the average maximum loss over groups, across $n_{\textrm{new}}$  (\cref{fig:pilot_results_goodread}).
Since per-group losses generally decrease with the increased allocations to that group, we expect the best minmax loss over groups to be achieved when the purple and grey bars meet in \cref{fig:pilot_results_goodread}. The allocation strategy 
$\vec{\alpha} = \hat{\alpha}^*_{\textrm{minmax}}$ does not quite achieve this; however, it does not increase the population loss over that of the other allocation strategies. 
This reinforces the finding of \cref{sec:navigating_tradoffs} that different per-group losses can be reached for similar population losses and provides evidence that we can navigate these possible outcomes by leveraging information from a small initial sample.}

While the results in \cref{fig:pilot_results_goodread} are promising, error bars highlight the variation across trials.
\newtext{
The variability in performance across trials for allocation baseline $\alpha^*_\textrm{grid}$ (which  is kept constant across the ten trials) is largely consistent with that of the other allocation sampling strategies examined (standard errors in \cref{fig:pilot_results_goodread}).}
However, the estimation of $\hat{\alpha}^*$ in each trial does introduce additional variation:
across the ten draws of the pilot data, the range of $\hat{\alpha}^*$ values for subsequent dataset size
\iftoggle{icml}{
$n_{\textrm{new}} = 10000$  is [1e-04,0.05], 
for $n_{\textrm{new}} = 20000$ it is [5e-05,0.14], 
and for $n_{\textrm{new}} = 40000$ it is [2e-05,0.82].  
}
{$n_{\textrm{new}} = 5000$ is [2e-04,0.04],  
for $n_{\textrm{new}} = 10000$ it is [1e-04,0.05], 
for $n_{\textrm{new}} = 20000$ it is [5e-05,0.14], 
and for $n_{\textrm{new}} = 40000$ it is [2e-05,0.82].  
}
{
Therefore, the estimated $\hat{\alpha}^*$ should be leveraged with caution, especially if the subsequent sample will be much larger than the pilot sample. Further caution should be taken if there may be distribution shifts between the pilot and subsequent samples. We suggest to interpret  estimated $\hat{\alpha}^*$ values as one signal among many that can inform a dataset design \newtext{in conjunction with current and emerging practices for ethical data collection (see \cref{sec:limitations_future_work}).}

\subsection{Interactions Between Groups}
\label{sec:group_interactions}
\newtext{
We now shift the focus of our analysis to explore potential between- and within- group interactions that are more nuanced than the scaling law in Eq.~\eqref{eq:scaling} provides for. The results  highlight the need for and encourage future work extending our analysis to more complex notions of groups (e.g., intersectional, continuous, or proxy groups). }

As discussed in \cref{sec:allocating_samples}, 
%
data from groups similar to or different from group $\group$ may have greater effect on $\mathcal{R}(\hat{f}(\mathcal{S});\mathcal{D}_g)$ compared to data drawn at random from the entire distribution.
We examine this possibility on the ISIC dataset, which is aggregated from different studies (\cref{sec:datasets_longer_desc}). 
We measure baseline performance of the model trained on data from all of the studies. 
We then remove one study at a time from the training set, retrain the model, and evaluate the change in performance for all studies in the test set.

\cref{fig:isic_loo_studies} shows the \iftoggle{icml}{}{percent }changes in performance due to leaving out studies from the training set. \iftoggle{icml}{}{The MSK and UDA studies are comprised of 5 and 2 sub-studies, respectively; \cref{fig:isic_loo_substudies} shows the results of leaving out each sub-study.}
Rows correspond to the study withheld from the training set and columns correspond to the study used for evaluation. Rows and columns are ordered by $\%$ malignancy. For \cref{fig:isic_loo_studies} this is the same as ordering by dataset size, SONIC being the largest study.

Consistent with our modelling assumptions and results so far, 
accuracies evaluated on group $g$ decrease as a result of removing group $g$ from the training set (diagonal entries of \cref{fig:isic_loo_groups}). However, additional patterns 
show more nuanced relationships between groups.
%

Positive values in the  \iftoggle{icml}{upper right region of \cref{fig:isic_loo_studies}}{upper right regions of \cref{fig:isic_loo_studies,fig:isic_loo_substudies}} show that 
excluding studies with low malignancy rates can raise performance evaluated on studies with high malignancy rates.
\newtext{
This could be partially due to differences in label distributions when removing certain studies from the training data.
Importantly,} this provides a counterpoint to an assumption implicit in  \cref{ass:scaling}, that group risks decrease in the 
total training set size $n$, 
regardless of the groups these $n$ instances belong to. 
To study more nuanced interactions between pairs $\group'\!\neq\! \group$, future work could
modify Eq.~\eqref{eq:scaling} by reparameterizing $r(\cdot)$ to directly account for $n_{\group'}$.



\iftoggle{icml}
{\begin{figure}[!tb]
     \centering
        \centering
         \includegraphics[width=.92\columnwidth]{figures/exp5:isic_loo/isic_loo_datasets_aggregated_reformatted.pdf}
         \caption{
          \label{fig:isic_loo_studies}
         \label{fig:isic_loo_groups}
         Percent change in performance (AUROC / accuracy) due to withholding a study from the training set.
         SONIC contains only  benign instances and 2018 JID Ed. contains only malignant instances; for these we report $\%$ change in binary accuracy. 
         }
\end{figure}}
{\begin{figure}[!thb]
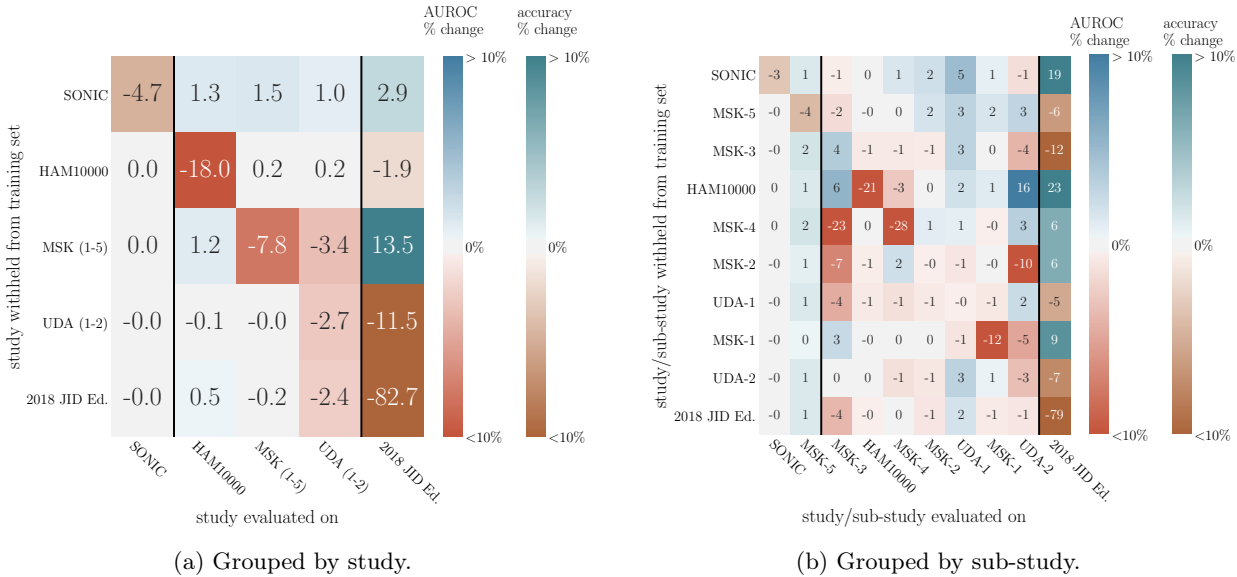

     \centering
     \begin{subfigure}[b]{0.48\textwidth}
        \centering
         \includegraphics[width=\textwidth]{figures/exp5:isic_loo/isic_loo_datasets_aggregated.pdf}
         \caption{Grouped by study. \label{fig:isic_loo_studies}}
         \end{subfigure}
         \hfill
     \begin{subfigure}[b]{0.48\textwidth}
        \centering
         \includegraphics[width=\textwidth]{figures/exp5:isic_loo/isic_loo_datasets.pdf}
         \caption{Grouped by sub-study.  \label{fig:isic_loo_substudies}}
         \end{subfigure}
         \caption{
         \label{fig:isic_loo_groups}
         Percent change in performance (AUROC / accuracy) due to withholding a study from the training set.
         Studies are ordered by $\%$ malignancy of the data in the evaluation set. 
         SONIC and MSK-5 contain all benign instances and the 2018 JID Editorial Images dataset has all malignant instances; for these we report $\%$ change in binary accuracy. For the remaining groups, we report $\%$ change in AUROC. \newtext{Note that the random training/test splits differ between (a) and (b), accounting for the differences in values for corresponding cells between the two figures.}}
\end{figure}}
Grouping by substudies within the UDA and MSK studies reveals that even within well defined groups, interactions between subgroups can arise. 
%
Negative off-diagonal entries in \cref{fig:isic_loo_substudies} suggest strong interactions between different groups, 
underscoring the importance of evaluating results across hierarchies and intersections of groups when feasible. 

Of the 16,965 images in the full training set, 7,401 are from the SONIC study.
When evaluating on all non-SONIC instances (like the evaluation set from the rest of the paper), withholding the SONIC study from the training set \iftoggle{icml}{}{(akin to the training set of the rest of the paper) }leads to higher AUROC (.905) than training on all studies (0.890).
This demonstrates that more data is not always better, especially if the distributional differences between the additional data and the target populations are not well accounted for.

\section{Generalizations, limitations, and future work}
\label{sec:limitations_future_work}

We study the ways in which group and population performance depend on the numerical allocations of discrete groups in training sets. 
While focusing on discrete groups allows us to derive meaningful results, understanding similar phenomena for intersectional groups and continuous notions of inclusion is an important next step.  Addressing the more nuanced relationships between the allocations of different data sources (\cref{sec:group_interactions}) is a first step in this direction.

We find that underrepresentation of groups in training data can limit group and population accuracies. 
\iftoggle{icml}
{
However, naive targeted data collection attempts can present undue burdens of surveillance or skirt consent \citep{paullada2020data}. 
When ML systems fail 
subpopulations due to measurement or construct validity issues, more comprehensive interventions  are needed \citep{Jacobs2019measurement}.}
{
However, assuming we can easily and ethically collect more data about any group is often naive, as there may be unintended consequences of upweighting certain groups in an objective function. Naive targeted data collection attempts can present undue burdens of surveillance or skirt consent \citep{paullada2020data}. 
When ML systems fail to represent
subpopulations due to measurement or construct validity issues, more comprehensive interventions  are needed \citep{Jacobs2019measurement}.}

\newtext{Our results expose key properties of sub-group representation in training data from a statistical sampling perspective, complementary to current and emerging practices for ethical, contextualized data collection and curation \cite{gebru2018datasheets,jo2020lessons,denton2020bringing,abebe2021narratives}}.
Studying the role of numerical allocation targets within ethical and context-aware data collection practices will be an important step toward operationalizing our findings.

Representation is a broad\iftoggle{icml}{,}{ and} often ambiguous concept \citep{chasalow2021representativeness}, and numerical allocation is an imperfect proxy of representation or inclusion.
\iftoggle{icml}{}{For example, if the data annotation process systematically misrepresents certain groups, optimizing allocations to maximize  accuracy with respect to those labels would not reflect our true goals across all groups. 
If prediction models are tailored to majority groups and are less relevant for smaller groups, an optimized allocation might allocate more data points to smaller groups as a remedy, when in reality a better model or feature representation is preferable.
True representation thus requires that each data instance measures the intended variables and their complexities in addition to numerical allocations. }
That said, if the optimal allocation for a certain group is well beyond that group's population proportion, this may be cause to reflect on why that is the case.
\iftoggle{icml}
{Future work could consider allocations as a lens for auditing the limits of prediction models from a data-focused perspective and
extend analysis to more objectives and loss functions (e.g. robustness or fairness objectives). }
{Future work could contextualize allocations as a way of auditing the limits of machine learning systems from a data-focused perspective.

By incorporating dataset collection as a design step in the learning procedure, 
we were able to assess and leverage the value of different data allocations toward reaching high group and population accuracies.
Extending this framework to other objectives and loss functions (e.g., robustness for out-of-distribution prediction and fairness objectives) will be an important area of future work. }


\section{Conclusions}
\label{sec:conclusion}
We demonstrate that representation in data is fundamental to training machine learning models that work for the entire population of interest.
By casting dataset design as part of the learning procedure, we can formalize the characteristics that training data must satisfy in order to reach the objectives and specifications of the overall machine learning system.
Empirical results bolster our theoretical findings and explore the nuances of real-data phenomena that call for domain dependent analyses in order to operationalize our general results in specific contexts.
Overall, we provide insight and constructive results toward understanding, prioritizing, and leveraging conscientious data collection for successful applications of machine learning.

\section*{Acknowledgments}

We thank Inioluwa Deborah Raji and Ludwig Schmidt for feedback at various stages of this work, and Andrea Bajcsy, Sara Fridovich-Keil, and Max Simchowitz for comments and suggestions during the editing of this manuscript.
We thank Nimit Sohoni and Jared Dunnmon for helpful discussions regarding the ISIC dataset. 

This material is based upon work supported by the NSF Graduate Research Fellowship under Grant No. DGE 1752814. ER acknowledges the support of a Google PhD Fellowship.
This research is generously supported in part by ONR awards N00014-20-1-2497 and N00014-18-1-2833, NSF CPS award 1931853, and the DARPA Assured Autonomy program (FA8750-18-C-0101).

\bibliographystyle{icml2021}
\bibliography{hetdata_reformatted.bib}

\pagebreak
\appendix
\renewcommand\thefigure{\thesection.\arabic{figure}}  
\renewcommand\thetable{\thesection.\arabic{table}} 
\section{Proofs and Derivations}
\iftoggle{icml}
{\subsection{Proof of \cref{prop:weights_and_allocations}}
\label{sec:weights_and_allocations_proof}}
{\subsection{Full proof of \cref{prop:weights_and_allocations}}
\label{sec:weights_and_allocations_proof}}

Recall the random variable $\hat{L}$
defined with respect to function $f$, loss function $\ell$, and group weights $w: \mathcal{\groups} \rightarrow \mathbb{R}^+$:
\begin{align*}
    \hat{L}(w, \alpha, n;f, \ell) &:= \frac{1}{n}\sum_{i \in \mathcal{S}(\vec{\alpha}, n)} w(\group_i) \cdot \ell(f(x_i),y_i) 
\end{align*}
where the randomness comes from the draws of $x_i, y_i$ from  $\mathcal{D}_{\group_{i}}$ according to procedure $\mathcal{S}(\vec{\alpha},n)$ (\cref{def:sampling_def}), as well as any randomness in $f$. 

\weightsallocations* 
\begin{proof}
For any n, any $(\alpha, w)$ pair induces a vector $\vec{\gamma}'$ with entries
$\gamma_\group'(w,\vec{\alpha}) := w(\group) \alpha_\group /
\left(\sum_{\group \in \groups} w(\group) \alpha_\group \right)$, where
\begin{align*}
    \mathbb{E}[\hat{L}(w, \vec{\alpha}, n;f, \ell)]
   & = \frac{1}{n} \sum_{(x_i,y_i,g_i) \in \mathcal{S}(\vec{\alpha},n)} w(\group_i) \mathbb{E}[\ell( f(x_i), y_i)] \\
    &= 
    \sum_{\group \in \groups}
    \frac{n_\group}{n}  w(\group) \mathbb{E}_{(x,y) \sim \mathcal{D}_\group}[\ell( f(x), y)] \\
    &= c \cdot \mathbb{E}_{\group \sim \textrm{Multinomial}(\vec{\gamma}')}\left[\mathbb{E}_{(x,y) \sim \mathcal{D}_\group} \left[\ell( f(x), y)\right]\right]
\end{align*}
for constant $c= \sum_\group \alpha_\group w(\group)$.
The vector $\vec{\gamma}'$ in this sense describes an implicit `target distribution' induced by the applying weights $w$ after sampling with allocation $\vec{\alpha}$.
Note that unless $w_\group = 0$ for all $g$ with $\alpha_g > 0$, $\vec{\gamma}'$ has at least one nonzero entry.
The constant $c$ re-scales the weighted objective function with original weights $w$ so as to match the expected loss with respect to the group proportions $\vec{\gamma}'$. 
Stated another way, for any alternative allocation $\alpha'$, we could pick weights $w'(\group) = c \gamma'_\group / \alpha'_\group$ (letting $w'(\group)=0$ if $\alpha'_\group=0$), and satisfy
\begin{align*}
    \mathbb{E}[\hat{L}( w',\vec{\alpha}', n; f, \ell )] & = \mathbb{E}[\hat{L}(w, \vec{\alpha}, n; f, \ell)] ~.
\end{align*}

Given this correspondence, we now find the pair $(\vec{\alpha}^*, w^*)$ which minimizes $Var[\hat{L}( cw', \vec{\alpha}',n; f, \ell)]$,
subject to $w'(\group) \alpha'_\group = c \gamma'_\group  $. Since the original pair $(\vec{\alpha}, w)$ satisfies this constraint (by construction), we must have 
\begin{align*}
    \min_{\vec{\alpha'}, w' : w'(\group) \alpha'_\group = c \gamma'_\group } Var[\hat{L}( w', \vec{\alpha}',n; f, \ell)]
    \leq Var[\hat{L}( w, \vec{\alpha},n; f, \ell)] ~.
\end{align*}
We first compute $Var[\hat{L}( w', \vec{\alpha}',n; f, \ell)]$. By \cref{def:sampling_def}, samples $(x_i,y_i)$ are assumed to be independent draws from distributions $\mathcal{D}_{g_i}$, so that the variance of the estimator can be written as (for convenience we assume here that $n \alpha'_\group \in \mathbb{Z}$, see the discussion below):
\begin{align*}
Var\left[\hat{L}( w', \vec{\alpha}',n; f, \ell) \right] 
     &=  \frac{1}{n^2}\sum_{\group \in \groups} w'(\group)^2
     \sum_{i=1}^{n \alpha'_\group}\mathbb{E}_{(x_i,y_i) \sim \mathcal{D}_\group}\left[ \left(\ell(f(x_i),y_i)- \mathbb{E}_{(x,y)\sim\mathcal{D}_\group}[\ell(f(x),y)] \right)^2   \right] \\
     &=  \frac{1}{n}\sum_{\group \in \groups} \alpha'_\group w'(\group)^2   Var\left[\ell_\group^{(i)}   \right] ~,
\end{align*}
where $Var\left[\ell_\group^{(i)} \right]$ denotes shorthand for $\mathbb{E}_{(x_i,y_i) \sim \mathcal{D}_\group}\left[ \left(\ell(f(x_i),y_i)- \mathbb{E}_{(x,y)\sim\mathcal{D}_\group}[\ell(f(x),y)] \right)^2 \right]$~.
Now, to respect the constraint $w'(\group)\alpha'_\group = c \gamma_\group'$ means that for any $\group$ with $\gamma'_\group >0$,  $w'(\group)$ is a deterministic function of $\alpha'_\group$, since $c$ and $\vec{\gamma}'$ are determined by the initial pair $(\vec{\alpha},w)$. Then it is sufficient to compute
\begin{align*}
   \argmin_{\alpha' \in \Delta^{|\groups|}} \frac{1}{n}\sum_{\group \in \groups: \gamma'_\group > 0} \alpha'_\group 
    \left(\frac{c \gamma_\group'}{\alpha'_\group} \right)^2
    Var\left[\ell_\group^{(i)}   \right] &=
    \argmin_{\alpha' \in \Delta^{|\groups|}} \frac{c^2}{n}\sum_{\group \in \groups : \gamma'_\group > 0} \frac{ (\gamma_\group')^2}{\alpha'_\group} 
    Var\left[\ell_\group^{(i)}   \right] ~.
\end{align*}
The minimizer $\vec{\alpha}^*$ has entries $\alpha^*_\group = \gamma_\group' \sqrt{Var[\ell_\group^{(i)}]} / \left(\sum_{\group \in \groups} \gamma_\group ' \sqrt{Var[\ell_\group^{(i)}]}\right)
$. 
Because $\vec{\alpha}^*$ is unique and determines $w^*$, $Var[\hat{L}( w^*, \vec{\alpha}^*,n; f, \ell) ]  < Var[\hat{L}( w', \vec{\alpha}',n; f, \ell) ] $ with strict inequality unless $(w', \vec{\alpha}') = (w^*, \vec{\alpha}^*)$.
The optimal weights  are 
$w^*(\group) = c\gamma'_\group / \alpha^*_\group = c {\left(\sum_{\group \in \groups}\gamma'_\group\sqrt{Var[\ell_\group^{(i)}]}\right)}/{\sqrt{Var[\ell_\group^{(i)}]}}$. Note that for pair of groups $(\group, \group')$, the relative weights satisfy $w^*(\group)/w^*(\group') = \sqrt{{Var[\ell_{\group'}^{(i)}]}/{Var[\ell_\group^{(i)}]}}$, and thus do not depend on $\vec{\gamma}'$. 

If we consider finite sample concerns, the minimizer $\alpha^*$ must satisfy integer values $n\alpha^*_\group \in \mathbb{Z}^+ \, \forall \group \in \groups$. In this case, efficient algorithms exist for finding the integral solution to allocating  $n_g$ \citep{wright2020general}. However, the non-integer restricted solution $\alpha^*$ has a closed form solution, and we will use the fact that for any group g, $\alpha_g^*$ as defined above and its variant with the additional constraint that $n\alpha^*_\group \in \mathbb{Z}^+$ can differ by at most $\tfrac{|\groups|}{n}$. This means that any $\vec{\alpha}$ with $|\alpha_g - \alpha^*_\group| > \tfrac{|\groups|}{n}$ cannot be a minimizer of the objective function, even constrained to $n\alpha^*_\group \in \mathbb{Z}$. Since $w(g) \alpha_g = c \gamma_g'$, an equivalent statement in terms of $w$ is $| 1 - \frac{w(\group)}{w^*(\group)}| > \frac{w(\group)|\groups| }{n c \gamma'(\group)} = \frac{|\groups|}{n \alpha_{\group}} = \frac{|\groups|}{n_\group}$.

Finally, we show that if $w^*(\group) < w(\group)$, $\alpha_\group^* > \alpha_\group$.
This follows from our definition of $\vec{\gamma}'$ such that $w(\group) = c \gamma'_\group / \alpha_\group$, and our constraint, such that $w^*(\group) = c \gamma'_\group / \alpha^*_\group$. From these, we must have that $w^*(\group) \alpha_\group^* = w(\group) \alpha_\group$, from which the claim and its reverse follow.
\end{proof}

\subsection{Proof of \cref{lem:alpha_star}}
\label{sec:alpha_star_calculation}

\alphastar*

\begin{proof}
Recall the decomposition of the estimated 
population risk:
\begin{align*}
\label{eq:group_decomposition}
    \mathbb{E}_{(x,y) \sim \mathcal{D}} \left[ \ell(\hat{f}_\mathcal{S}(x),y)\right] 
    = 
    \sum_{\group  \in \groups} \gamma_\group \cdot \mathbb{E}_{(x,y) \sim \mathcal{D}_\group } \left[ \ell(\hat{f}(x) , y)\right] 
\approx \sum_{\group \in \groups} \gamma_\group  \left(\sigma_\group ^2 (n \alpha_\group )^{-p} +\tau_\group ^2 n^{-q} + \delta_\group \right) ~.
\end{align*}
Now we find
\begin{align*}
\vec{\alpha}^* =
\argmin_{\vec{\alpha} \in \Delta^{|\groups|}} \sum_{\group  \in \groups} \gamma_\group  \left(\sigma_\group ^2 (n \alpha_\group )^{-p} +\tau_\group ^2 n^{-q} + \delta_\group \right) 
=
\argmin_{\vec{\alpha} \in \Delta^{|\groups|}} (n^{-p})\sum_{\group  \in \groups} \gamma_\group  \left(\sigma_\group ^2 ( \alpha_\group )^{-p} \right)
&=
\argmin_{\vec{\alpha} \in \Delta^{|\groups|}}\sum_{\group  \in \groups} \gamma_\group  \sigma_\group ^2 \alpha_\group ^{-p} ~.
\end{align*}
If $\sigma_\group = 0 \ \forall \group \in \groups$, then any allocation $\vec{\alpha}^* \in \Delta^{|\groups|}$ minimizes the approximated population loss. Otherwise, $\alpha^*_\group = 0$ will be $0$ for any group with $\sigma_\group = 0$; what follows describes the solution  $\alpha^*_\group$ for $g$ with $\sigma_\group > 0$. 
If any $\alpha_\group = 0$, then the objective is unbounded above, so we can restrict our constraints to $\vec{\alpha} \in (0,1)^{|\groups|}$. As the sum of convex functions, the objective function is convex in $\vec{\alpha}$. It is continuously differentiable when $\alpha_\group > 0, \, \forall \group \in \groups$. 
The KKT conditions 
are satisfied when 
\begin{align*}
   p \gamma_\group  \sigma^2_\group  \alpha_\group ^{-(p+1)} &= \lambda \quad \forall \group  \\
   \sum_{\group } \alpha_\group   &= 1 ~.
\end{align*}
Solving this system of equations yields that the KKT conditions are satisfied when 
$
    \alpha^*_\group = {\left(\gamma_\group  \sigma^2_\group \right)^{1/(p+1)}}/{\sum_\group  \left(\gamma_\group  \sigma^2_\group \right)^{1/(p+1)}} ~.
$
Since this is the only solution to the KKT conditions, it is the unique minimizer. 
\end{proof}

\subsection{Proof of \cref{cor:same_sigma}}
\label{sec:cor1_proof}

\equalsigma*

\begin{proof}
Let $m = |\groups|$ denote the number of groups. When $\sigma_\group = \sigma \ \forall \group \in \groups$, 
\begin{align*}
\alpha^*_\group = \frac{\gamma_\group^{1/(p+1)}}{\sum_\group\gamma_\group^{1/(p+1)}} &= \gamma_\group \cdot \frac{1}{\gamma_\group + \gamma_\group^{p/(p+1)}\sum_{\group' \neq \group} \gamma_{\group'}^{1/(p+1)}} ~.
\end{align*}
Since $p > 0$ by \cref{ass:scaling}, we have that $\sum_{i=1}^n{\gamma_i}^{1/(p+1)}$ with $\gamma_i$ subject to (a) $\sum_{i=1}^n\gamma_i = s$ and (b) $\gamma_i > 0$ is maximized when all $\gamma_i$ are equal. Then, since $\sum_{\group' \neq \group} \gamma_\group' = 1 - \gamma_\group$, 
\begin{align*}
\gamma_\group \cdot \frac{1}{\gamma_\group + \gamma_\group^{p/(p+1)}\sum_{\group' \neq \group} \gamma_{\group'}^{1/(p+1)}} & \geq \gamma_\group \cdot \frac{1}{\gamma_\group + \gamma_\group^{p/(p+1)}\sum_{\group' \neq \group} \left(\frac{1-\gamma_\group}{m-1}\right)^{1/(p+1)}} \\
&= \gamma_\group \cdot \frac{1}{\gamma_\group +\left((m-1) \gamma_\group\right)^{p/(p+1)} (1-\gamma_\group)^{1/(p+1)}} ~.
\end{align*}
When $\gamma_\group \leq 1/m$, $\gamma_\group / (1-\gamma_\group) \leq \frac{1}{m-1}$, so that
\begin{align*}
\alpha^*_\group &\geq
\gamma_\group \cdot \frac{1}{\gamma_\group + (1-\gamma_\group)\left(m-1\right)^{p/(p+1)} \left(\gamma_\group/(1-\gamma_\group)\right)^{p/(p+1)}} \geq \gamma_\group \cdot \frac{1}{\gamma_\group + (1-\gamma_\group)} =  \gamma_\group ~. \qedhere
\end{align*} 
\end{proof}

\subsection{Proof of \cref{cor:two_groups_unequal_sigma}}
\label{sec:cor2_proof}

\twogroupsunequalsigma*

\begin{proof}
For the setting of two groups, we can express the optimal allocations as:
\begin{align*}
    \alpha^*_{A} &= \frac{\left(\gamma_A \sigma^2_A\right)^{1/(p+1)}}{\left(\gamma_A \sigma^2_A\right)^{1/(p+1)} + \left((1-\gamma_A) \sigma^2_B\right)^{1/(p+1)}}, \quad
    \alpha^*_B = 1 - \alpha^*_A
\end{align*}
Rearranging,
\begin{align*}
    \alpha^*_{A} 
     & = \gamma_A \frac{1}{\gamma_A + (\sigma_B^2 / \sigma_A^2)^{1/(p+1)}(1-\gamma_A)^{1/(p+1)} (\gamma_A)^{p/(p+1)}}~.
 \end{align*}
For $p > 0$, it holds that $0 < \frac{1}{p+1} < 1$. Therefore, for any $p >0$ and $\gamma < 0.5$,
\begin{align*}
  \gamma <  (\gamma)^{\frac{1}{p+1}} (1-\gamma)^{ \frac{p}{p+1}} <  (1-\gamma) ~.
\end{align*}
From this, we derive the upper bound
\begin{align*}
 \alpha^*_A < \gamma_A \frac{1}{\gamma_A + (\sigma_B^2 / \sigma_A^2)^{1/(p+1)}(\gamma_A) }
 = \frac{(\sigma_A^2)^{1/(p+1)}}{(\sigma_A^2)^{1/(p+1)} + (\sigma_B^2)^{1/(p+1)}},
\end{align*}
and the lower bound
\begin{align*}
    \alpha^*_A > \gamma_A \frac{1}{\gamma_A + (\sigma_B^2 / \sigma_A^2)^{1/(p+1)}(1-\gamma_A) }
 = \gamma_A \frac{( \sigma_A^2)^{1/(p+1)}}{\gamma_A ( \sigma_A^2)^{1/(p+1)} + (\sigma_B^2)^{1/(p+1)}(1-\gamma_A) } ~.
\end{align*}
When $\sigma_A \geq \sigma_B$,
\begin{align*}
    \alpha^*_A >\gamma_A \frac{( \sigma_A^2)^{1/(p+1)}}{\gamma_A ( \sigma_A^2)^{1/(p+1)} + (\sigma_B^2)^{1/(p+1)}(1-\gamma_A) }
    > \gamma_A \frac{( \sigma_A^2)^{1/(p+1)}}{(\gamma_A + 1-\gamma_A)( \sigma_A^2)^{1/(p+1)}} = \gamma_A ~, 
\end{align*}
and when $\sigma_A \leq \sigma_B$,
\begin{align*}
 \alpha^*_A < 
 \frac{(\sigma_A^2)^{1/(p+1)}}{(\sigma_A^2)^{1/(p+1)} + (\sigma_B^2)^{1/(p+1)}} 
 < \frac{(\sigma_A^2)^{1/(p+1)}}{(\sigma_A^2)^{1/(p+1)} + (\sigma_A^2)^{1/(p+1)}} = 1/2
 ~. 
\end{align*} 
\end{proof}

\subsection{Additional Derivations}
\label{appendix:ols_derivation}
In \cref{ex:linear_model_with_dummies}, we consider the model
$
    y_i = x_i^\top \beta + \alpha_1 \mathbb{I}[g_i = A] + \alpha_2 \mathbb{I}[g_i = B] + \mathcal{N}(0,\sigma^2)
$
where $x_i \sim \mathcal{N}(0,\Sigma_x)$ and denote $\theta = [\alpha_1, \alpha_2, \beta^\top]$ (note: here $\alpha_i$ denote the model coefficients, not allocations). We want to compute 
\begin{align*}
    \mathbb{E}_{(x,y,g) \sim \mathcal{D}} \left[ (\hat{f}(x) - y)^2 | g = A\right]  & = \sigma^2 + \mathbb{E} \left[\|x^\top (\hat{\beta} -\beta) + (\hat{\alpha}_1 - \alpha_1)\|^2\right]
    \\
    &= \sigma^2 + \mathbb{E} \left[x^\top (\hat{\beta} -\beta)(\hat{\beta} -\beta)^\top x \right] + 2 \mathbb{E} \left[(\hat{\alpha}_1 - \alpha_1) (\hat{\beta} - \beta)^\top x \right] + \mathbb{E} \left[(\hat{\alpha}_1 - \alpha_1)^2\right].
\end{align*}
Since the draw $(x,y,g) \sim \mathcal{D}$ is independent of the data from which the ordinary least squares solution $\hat{\theta}$ is predicted, we can write out each of these terms in terms of the dependence on $n$, the total number of data points, as well as $n_A$ and $n_B$ (where $n_A + n_B = n$), the total number of datapoints for each group, from which $\hat{\theta}$ is estimated. 
To do this, we'll solve for the entries of the covariance matrix:
\begin{align*}
    \mathbb{E} [(\hat{\theta}-\theta)(\hat{\theta}-\theta)^\top] 
    = 
   \begin{bmatrix}
    \mathbb{E} \left[ (\hat{\alpha_1}-\alpha_1)^2\right] & \mathbb{E} \left[ (\hat{\alpha_1}-\alpha_1)(\hat{\alpha_2}-\alpha_2)\right]& \mathbb{E} \left[ (\hat{\beta}-\beta)(\hat{\alpha_1}-\alpha_1)\right]^\top\\
    \mathbb{E} \left[ (\hat{\alpha_1}-\alpha_1)(\hat{\alpha_2}-\alpha_2)\right]& \mathbb{E} \left[ (\hat{\alpha_2}-\alpha_2)^2\right]&  \mathbb{E} \left[ (\hat{\beta}-\beta)(\hat{\alpha_2}-\alpha_2)\right]^\top\\
   \mathbb{E} \left[ (\hat{\beta}-\beta)(\hat{\alpha_1}-\alpha_1)\right]  &  \mathbb{E} \left[ (\hat{\beta}-\beta)(\hat{\alpha_2}-\alpha_2)\right] & \mathbb{E} \left[(\hat{\beta}-\beta)(\hat{\beta}-\beta)^\top\right]
  \end{bmatrix}
    = \sigma^2 (Z^\top Z)^{-1}
\end{align*}
where $Z$ is the $n \times (d+2)$ design matrix with rows $\{(\mathbb{I}[g_i = A], \mathbb{I}[g_i = B], x_i^\top)\}_{i=1}^n$~. 
Next, we find the block entries of the matrix $(Z^\top Z)^{-1}$. We first interrogate the term within the inverse:
\begin{align*}
    Z^\top Z = 
    \begin{bmatrix}
    n_A & 0 & n_A \bar{x}_A^\top\\
    0 & n_B & n_B \bar{x}_B^\top\\
    n_A \bar{x}_A & n_B \bar{x}_B & X^\top X
    \end{bmatrix}
\end{align*}
where $\bar{x}_A = \frac{1}{n_A} \sum_{i=1}^n x_i \cdot \mathbb{I}[g_i = A]$, and similarly for $\bar{x}_B$.
We'll now use the Schur complement to compute the desired blocks of $\Sigma$. The Schur complement is
\begin{align*}
S = X^\top X - 
    \begin{bmatrix}
    n_A \bar{x}_A & n_B \bar{x}_B
    \end{bmatrix}
    \begin{bmatrix}
    n_A^{-1} & 0 \\
    0 & n_B^{-1} 
    \end{bmatrix}
    \begin{bmatrix}
     n_A \bar{x}_A^\top\\
    n_B \bar{x}_B^\top\\
    \end{bmatrix} 
   =
    X^\top X - n \bar{x}\bar{x}^\top~,
\end{align*}
which we simplify to $S = X^\top X$ by assuming that we zero-mean the sample feature matrix $X$ before calculating the least squares solution. 
Using the Schur complement, the covariance matrix in block form is
\begin{align*}
    (Z^\top Z)^{-1} 
    = 
    \begin{bmatrix}
        \frac{1}{n_A} + \bar{x}_A^\top S^{-1}\bar{x}_A & \bar{x}_A^\top S^{-1}\bar{x}_B & -\bar{x}_A^\top S^{-1} \\ 
        \bar{x}_B^\top S^{-1}\bar{x}_A &\frac{1}{n_B} + \bar{x}_B^\top S^{-1}\bar{x}_B  & -\bar{x}_B^\top S^{-1} \\
        -S^{-1} \bar{x}_A &  -S^{-1} \bar{x}_B & S^{-1} 
        \end{bmatrix} ~.
\end{align*}
Plugging in the appropriate blocks to our original equation, we get:
\begin{align*}
    \mathbb{E}_{(x,y,g) \sim \mathcal{D}} \left[ (\hat{f}(x) - y)^2 | g = A\right] 
    &= \sigma^2 + \mathbb{E}_{(x,y) \sim \mathcal{D}_A} \left[x^\top (\hat{\beta} -\beta)(\hat{\beta} -\beta)^\top x  + 2 (\hat{\alpha}_1 - \alpha_1) (\hat{\beta} - \beta)^\top x +  (\hat{\alpha}_1 - \alpha_1)^2\right] \\
    & = \sigma^2( 1 +\frac{1}{n_A} + \mathbb{E}_{g=A} \left[x^\top S^{-1} x  +  \bar{x}_A^\top S^{-1} \bar{x}_A\right]) , 
\end{align*}
where the middle term cancels since $\mathbb{E}[x] = 0$.
Note that $S$ is the scaled sample covariance matrix.
The vectors $x_i$ are drawn i.i.d. from $\mathcal{N}(0,\Sigma_x)$ so that $S^{-1}$ follows an inverse Wishart distribution with parameters $n,d, \Sigma_x$. For the fresh sample $x$, 
\begin{align*}
    \mathbb{E}[x^\top S^{-1} x] &=  \textrm{Trace}(\mathbb{E}[S^{-1}]\mathbb{E}[xx^\top])= \frac{d}{n-d-1}Tr(\Sigma_{x}^{-1}\Sigma_{x})    = \frac{d}{n-d-1} ~.
\end{align*} 
For the $\bar{x}_A^\top S^{-1} \bar{x}_A$ term  we invoke the matrix inversion lemma. For  a single row $x_i$ of $X$, let $X_{-i}$ denote the $(n-1) \times d$ matrix comprised of all rows of $X$ except $X_i$. Then 
\begin{align*}
    x_i^\top (X^\top X)^{-1} x_i &= 
    x_i^\top (X_{-i}^\top X_{-i} + x_i x_i^\top)^{-1} x_i \\
    &= 
    x_i^\top \left((X_{-i}^\top X_{-i})^{-1} - (X_{-i}^\top X_{-i})^{-1} x_i (1 + x_i^\top(X_{-i}^\top X_{-i})^{-1}x_i)^{-1}x_i^\top (X_{-i}^\top X_{-i})^{-1}  \right) x_i ~.
\end{align*}
Letting $a_i = x_i^\top (X_{-i}^\top X_{-i})^{-1} x_i \geq 0$, we rewrite the above as
\begin{align*}
    x_i^\top (X^\top X)^{-1} x_i &= a_i - \frac{a_i^2}{1 +a_i}
    = \frac{a_i}{1+a_i} \leq a_i ~.
\end{align*}
Since the $x_i$ are independent and zero mean,
$
\mathbb{E}\left[ x_i^\top (X_{-i}^\top X_{-i})^{-1} x_j\right] =\mathbb{E}[x_i^\top]\mathbb{E}[ (X_{-i}^\top X_{-i})^{-1} x_j]  =  0 \, \forall i \neq j.
$
From a similar argument to that given above, we derive that $\mathbb{E}[a_i] = d / (n - d - 2)$, so that
\begin{align*}
    \mathbb{E}\left[\bar{x}_A^\top S^{-1} \bar{x}_A\right] = 
    \mathbb{E}\left[\left(\frac{1}{n_A}\sum_{i}^{n_A} x_i\right)^\top S^{-1}\left(\frac{1}{n_A}\sum_{i}^{n_A}x_i\right) \right] 
    = \frac{1}{n_A^2}\mathbb{E}\left[\sum_{i}^{n_A} x_i^\top S^{-1} x_i \right]  
    \leq \frac{1}{n_A} \cdot \frac{d}{n-d-2} ~.
\end{align*}
Putting this all together, we conclude that for $n\gg d$,
\begin{align*}
      \mathbb{E}_{(x,y,g) \sim \mathcal{D}} \left[ (\hat{f}(x) - y)^2 | g = A\right] 
     = 
     \sigma^2 \left(1 + \frac{1}{n_A} + \frac{d}{n-d-1} +
     \mathbb{E}[\bar{x}_A^\top S^{-1} \bar{x}_A]\right)
     = 
     \sigma^2\left( 1 + \frac{1}{n_A}
     +
     O\left(\frac{d}{n}\right)\right)
     ~.
\end{align*} 

\section{Experiment Details}

\label{app:experiment_appendix}
\Cref{sec:datasets_longer_desc} details the datasets used including preprocessing steps and any data excluded from the experiments; the remainder of \cref{app:experiment_appendix} provides a detailed explanation of each experiment described in the main text.  \cref{sec:appendix_experiments} describes additional experiments to complement the findings of the main experiments. \newtext{Code to process data and replicate the experiments detailed here is available at \url{https://github.com/estherrolf/representation-matters}.}

\iftoggle{icml}{

All image prediction models were run on a machine with 8 nVidia 2080 GPUs, 384GB RAM, and and Intel(R) Xeon(R) Gold 6126 CPU @ 2.60GHz, 12C/24HT, with the exception of the experiments detailed in \cref{sec:group_interactions_details}, which was run on a machine with 1 nVidia Titan-V GPU, 256GB RAM, Intel(R) Xeon(R) Gold 6148 CPU @ 2.40GHz, 20C/40HT. The non-image models were run on the latter machine.}
{The code repository accompanying this work can be found at \url{https://github.com/estherrolf/representation-matters}}.

\setcounter{figure}{0}   
\label{sec:experiment_appendix}
\subsection{Dataset Descriptions}
\label{sec:datasets_longer_desc}
We use and modify benchmark machine learning and fairness datasets, as well as more recent datasets on image diagnosis and student performance to study the effect of training set allocations on group and population accuracies in a systematic fashion. Each of the datasets we use is described below, with download links given at the end of this section.

\textbf{Modified CIFAR-4.} 
We modify an existing machine learning benchmark dataset, CIFAR-10 \cite{krizhevsky2009learning}, to instantiate binary classification tasks with binary groups, where groups $g$ are statistically independent of the labels $y$.
We take four of the ten CIFAR-10 classes: \{plane, car, bird, horse\}, and sort them into binary categories of \{land/air, animal/vehicle\} as in Fig.~\ref{fig:cifar_explanation1}.
This instantiation balances the classes labels among the two groups $g$, so that no matter the allocation of groups to the training set, the label distribution will remain balanced.
This will eliminate class imbalance as the cause of changes in accuracy due to training set composition or up-weighting methods.

There are $5,000$ training and $1,000$ test instances of each class in the CIFAR-10 dataset, resulting in $10,000$ training instances of each group in our modified CIFAR-4 dataset ($20,000$ instances total), and $2,000$ instances of each group in the test set ($4,000$ instances total). By construction, the average label in the test and train sets is $0.5$. Since this dataset is designed to assess the main questions of our study under a controlled setting and there is not a natural setting of population rates of animal and vehicle photos, we set the population prevalence parameter of group A (images of animals) as $\gamma_A = 0.1$. 

\begin{figure}[ht]
    \centering
         \includegraphics[width=.4\textwidth]{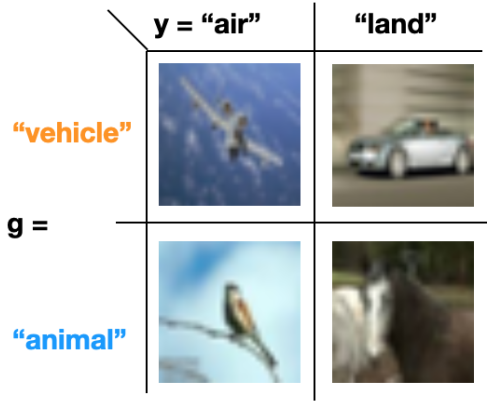}
         \caption{Modified CIFAR-4 dataset setup.}
         \label{fig:cifar_explanation1}
\end{figure}

\textbf{Goodreads ratings prediction.}
The Goodreads dataset~\citep{DBLP:conf/recsys/WanM18} contains a large corpus of book reviews and ratings, collected from public book reviews online. From the text of the reviews, we predict the numerical rating (integers 1-5) a reviewer assigns to a book.
From the larger corpus, we consider reviews for two genres: history/biography (henceforth ``history") and fantasy/paranormal (henceforth ``fantasy"). 
We calculate the population prevalences from the total number of reviews in the corpus for each genre. As of the writing of this document, there are $2,066,193$ reviews for history and biography books, and $3,424,641$ for fantasy and paranormal books, so that $\vec{\gamma} = [0.376, 0.624]$.

After dropping entries with no valid review or rating, we have 1,985,464 reviews from the history genre, and 3,309,417 reviews from the fantasy genre, with no books assigned to both genres.
To reduce dataset size and increase performance of the basic prediction task, we further reduce each dataset to only the 100 most frequently reviewed books of each genre (following a procedure similar to~\citep{chen2018my}). 
To instantiate the dataset we use in our analysis, we draw 62,500 review/rating pairs uniformly at random from each of these pools. The mean review for fantasy instances is 4.146, for history it is 4.103. We allocate $20\%$ of the data to a test set, and the remaining $80\%$ to the training set, with an equal number of instances of each genre in each set. 

We use  tf-idf vectors of the 2,000 most frequently occurring reviews from the entire dataset of 125,000 instances as features (a similar featurization to \citep{chen2018my}, with fewer total features). We note that this is a different version of the goodreads dataset from that used in~\citep{chen2018my}; the updated dataset we use has more reviews, and we use different group variables.

\textbf{Adult.} The Adult dataset, originally extracted from the 1994 Census database, is downloaded from the UCI Machine Learning Repository \citep{Dua:2019}. The dataset contains demographic features and the classification task is to predict whether an individual makes more than 50K per year. We drop the final-weight feature, a measure of the proportion of the population represented by that profile. There are $d=13$ remaining features; we one-hot encode non-binary discontinuous features including work class, education, marital status, occupation, relationship, race, and native country, resulting in $103$ feature columns. 

For the main analysis, we exclude features for sex, husband, and wife as they indicate group status but do not affect predictive ability (see \cref{sec:groups_as_features}), for a total of $10$ features and $100$ columns in the feature matrix.  We keep the original train and test splits, with $32,561$ ($21,790$ from group male and $10,771$ from group female) and $16,281$ ($10,860$ from group male and $5,421$ from group female) instances respectively. 
We group instances based on gender (encoded as binary male/female).
While the training and tests sets have roughly a $2/3$ male group, $1/3$ female group balance, we set $\vec{\gamma} = [0.5,0.5]$ to more adequately reflect the true proportions of men and women in the world.
In the test set, the average label for the male group is $0.300$, for the female group it is $0.109$ (for the train set, the numbers are similar; $0.306$ for the male group and $0.109$ for the female group).

\textbf{Mooc dataset.} The dataset we refer to as the MOOC (Massive Open Online Course) dataset is the HarvardX-MITx Person-Course Dataset \citep{HMdata}. This dataset contains anonymized data from the 2013 academic year offerings of MITx and HarvardX edX classes. Each row in the dataset corresponds to a student-course pairing; students taking more than one course may correspond to multiple rows. We keep the following demographic, participation, and administrative features: gender, LoE\_DI (highest completed level of education), final\_cc\_cname\_DI (country), YoB,
ndays\_act (number of active days on the edX platform), nplay\_video (number of video plays on edX), nforum\_posts (number of discussion forum posts on edX), nevents (number of interactions on edX), course\_id, certified (whether the student achieved a high enough grade for certification). After dropping instances without valid entries for these features, we have 25,213 instances. We one-hot encode non-binary discontinuous features including course\_id, LoE\_DI, and final\_cc\_cname\_DI, resulting in an expanded 47 feature columns from the original 9 features. In \cref{sec:groups_as_features} we exclude demographic features. We partition the data into a train set of size 24,130 and a test set of size 6,032, with the same proportion of groups in each set.

We group instances by the highest level of education self-reported by the person taking the class; we bin this into those who have completed at most secondary education, and those who have completed more than secondary education. For reference, in the USA, completing secondary education  corresponds to completing high school.
The train and test sets contain an equal fraction of each group, about $16\%$ instances where the person taking the course had no recorded education higher than secondary. Of all training and test instances, those from group A (student had at most secondary education) have a $5.2\%$ certification rate, and those from group B (student had more than secondary education) have an $8.2\%$ certification rate.

\textbf{ISIC skin cancer detection dataset.} 
We download the dataset from the ISIC archive \citep{codella2018skin,Tschandl2018,codella2019skin} using the repository \url{https://github.com/GalAvineri/ISIC-Archive-Downloader} to assist in downloading.
To match the dataset used in Sohoni et al.~\yrcite{sohoni2020no}, we use only images added or modified prior to 2019, which gives us 23,906 instances. From this set we include only images that are explicitly labeled as benign or malignant.
We additionally remove any data points from the `SONIC' study, as these are all benign cases, and are easily identified via colorful dots on the images~\citep{sohoni2020no}.
The remaining 11,952 instances are to our knowledge identical to the ``Non-patch" dataset in \citep{sohoni2020no}, up to the random splits into training/validation and test sets.

As groups, we subset based on the approximate age of the patient that is the subject of the photo. Approximate age is binned to the nearest 5 years in the original dataset; we design groups as A $ = \{i : \texttt{age\_approx[i]} \geq 55\}$ and B $ = \{i : \texttt{age\_approx[i]} < 55\}$. 
Of the group A instances in the training and test sets, $31.4\%$ are malignant; of the group B instances, $8.37\%$ are malignant.
We set $\vec{\gamma}$ to match the distribution in the 11,952 data points in our preprocessed set, so that $\vec{\gamma} = [0.43,0.57]$. We split 80\% of the data into a train set and 20\% into a test set, with the same proportion of groups in each set.

The ISIC archive is a compilation of medical images aggregated from many individual studies. Experiments detailed in \cref{sec:group_interactions,sec:group_interactions_details} use the study from which the image originates as groups to investigate the interactions between groups when $|\groups| > 2$. These results also motivate our choice to exclude the SONIC study from the dataset we use for the main analysis.

\textbf{Download links.}
Code for processing these datasets as described above can be found 
\iftoggle{icml}{in the code repository accompanying this paper.}{at \url{https://github.com/estherrolf/representation-matters}.}
The datasets we use (before our subsetting/preprocessing) can be downloaded at the following links:
\begin{itemize}
     \item{CIFAR-10: \url{https://www.cs.toronto.edu/~kriz/cifar.html}}
    \item{Goodreads: \url{https://sites.google.com/eng.ucsd.edu/ucsdbookgraph/home}}
    \item{Adult: \url{https://archive.ics.uci.edu/ml/datasets/adult}}
    \item{Mooc: \url{https://dataverse.harvard.edu/dataset.xhtml?persistentId=doi:10.7910/DVN/26147}}
    \item{ISIC: \url{https://www.isic-archive.com}}
\end{itemize}

\textbf{Loss metrics.}
For binary prediction problems, we report the the 0/1 loss when there is not significant class imbalance (modified CIFAR-4, Adult). For the ISIC and Mooc tasks, we report 1 - AUROC, where AUROC is the area under the reciever operating curve. AUROC is a standard metric for medical image prediction~\cite{sohoni2020no}, and for Mooc we choose this metric due to the label imbalance (low certification rates). 
Since AUROC is constrained to be between 0 and 1, the loss metric 1 - AUROC will also be between 0 and 1.
For the Goodreads dataset, we optimize the $\ell_1$ loss (mean absolute error, MAE); in \cref{sec:additional_model_results} we compare this to minimizing the $\ell_2$ loss (mean squared error, MSE) of the predictions.

\subsection{Models Details}

For the neural net classifiers, we compare the empirical risk minimization (ERM) objective with an importance weighted objective, implemented via importance sampling (IS) following results in~\citep{buda2018systematic}, and a group distributionally robust (GDRO) objective~\cite{Sagawa2020Distributionally}. 
We adapt our group distributionally robust training procedure from \url{https://github.com/kohpangwei/group_DRO}, the code repository accompanying~\citep{Sagawa2020Distributionally}.  
For the non-image datasets, we choose the model class from a set of common prediction functions (\cref{sec:experiments_hp}). We use implementations of these algorithms from \url{https://scikit-learn.org}, using the built in sample weight parameters to implement importance weighting (IW). Since many of the prediction functions we consider are not gradient-based, we cannot apply the algorithm from~\citep{Sagawa2020Distributionally}, thus we do not compare to GDRO for the non-image datasets.  

\begin{table}[]
 \renewcommand{\arraystretch}{1.2}
    \centering
    \footnotesize
    \begin{tabular} {c|c|c|c|c}
    \toprule
        dataset name &  metric  & model used & objective & parameters \\  \hline
        \multirow{5}{*}{CIFAR-4 }
         & \multirow{5}{*}{\shortstack[1]{0/1 loss} }  & \multirow{5}{*}{\shortstack[1]{ pretrained \\ resnet-18 \\ (fine-tuned) }} & ERM & \texttt{num\_epochs} = 20, \texttt{lr} = 1e-3 , \texttt{wd} = 1e-4, \texttt{momentum} = 0.9 \\
         \cline{4-5} 
        && & \multirow{2}{*}{IS} &  \texttt{num\_epochs} = 20,  \texttt{lr} =  1e-3, \texttt{momentum} = 0.9  \\ & & & &  \texttt{wd} = 1e-2 if $\alpha_A \geq$ 0.98 or $\alpha_A \leq$ 0.02; \texttt{wd} = 1e-3 otherwise  \\
        \cline{4-5} 
        &&& \multirow{2}{*}{GDRO} & \texttt{num\_epochs} = 20,  \texttt{lr} =  1e-3, \texttt{wd} = 1e-3, \texttt{momentum} =  0.9, \\ & & & & \texttt{gdro\_ss} = 1e-2,  \texttt{group\_adj} = 4.0 \\
        \hline
        \multirow{4}{*}{ISIC} & \multirow{4}{*}{1 - AUROC}  & \multirow{4}{*}{\shortstack[1]{ pretrained \\ resnet-18 \\
        (fine-tuned) }} & ERM & \texttt{num\_epochs} = 20,  \texttt{lr} =  1e-3, \texttt{wd} = 1e-4, \texttt{momentum} =  0.9 \\
        \cline{4-5} 
        && & IS & \texttt{num\_epochs} = 20,  \texttt{lr} =  1e-3, \texttt{wd} = 1e-3, \texttt{momentum} =  0.9\\
        \cline{4-5} 
        &&& \multirow{2}{*}{GDRO} & \texttt{num\_epochs} = 20,  \texttt{lr} =  1e-3, \texttt{wd} = 1e-4, \texttt{momentum} =  0.9, \\ & & & &  \texttt{gdro\_ss} = 1e-1, \texttt{group\_adj} = 1.0 \\
        \hline
        \multirow{2}{*}{Goodreads*} & \multirow{2}{*}{\shortstack[1]{$\ell_1$ loss \\(MAE)}  }& \multirow{2}{*}{ \shortstack[1]{ multiclass \\ logistic regression}} & ERM & \texttt{C} = 1.0, \texttt{penalty} = $\ell_2$\\
        \cline{4-5} 
        &&& IW & \texttt{C} = 1.0, \texttt{penalty} = $\ell_2$\\
        \hline
          \multirow{2}{*}{Goodreads} & \multirow{2}{*}{\shortstack[1]{$\ell_2$ loss \\ (MSE)}  }& \multirow{2}{*}{ \shortstack[1]{ multiclass \\ logistic regression}} & ERM & $\lambda = 0.1$ \\
        \cline{4-5} 
        &&& IW & $\lambda = 1.0$ if $\alpha_A \geq 0.95$ or $\alpha_A \leq 0.05$; $\lambda = 0.1$ otherwise\\
        \hline
         \multirow{2}{*}{ \shortstack[1]{Mooc* (with \\ dem. features) }} & \multirow{2}{*}{1 - AUROC  }& \multirow{2}{*}{ \shortstack[1]{random forest \\ classifier} } & ERM & \texttt{num\_trees} = 400, \texttt{max\_depth} = 16  \\
         \cline{4-5}
         &&&  IW &  \texttt{num\_trees} = 400, \texttt{max\_depth} = 16 \\
        \hline
        \multirow{2}{*}{ \shortstack[1]{Mooc (no \\ dem. features) }} & \multirow{2}{*}{1 - AUROC  }& \multirow{2}{*}{ \shortstack[1]{random forest \\ classifier} } & ERM & \texttt{num\_trees} = 400, \texttt{max\_depth} = 8  \\
         \cline{4-5}
         &&&  IW &  \texttt{num\_trees} = 400, \texttt{max\_depth} = 8 \\
        \hline
         \multirow{2}{*}{\shortstack[1]{Adult (with  \\ group features)} } & \multirow{2}{*}{\shortstack[1]{0/1 \\loss } } & \multirow{2}{*}{\shortstack[1]{random forest \\ classifier} } & ERM & \texttt{num\_trees} = 200, \texttt{max\_depth} = 16 \\
         \cline{4-5}
         &&&  IW &  \texttt{num\_trees} = 200, \texttt{max\_depth} = 16\\
         \hline
         \multirow{2}{*}{\shortstack[1]{Adult* (without  \\ group features)} } & \multirow{2}{*}{\shortstack[1]{0/1 \\ loss} } & \multirow{2}{*}{\shortstack[1]{random forest \\ classifier} } & ERM & \texttt{num\_trees} = 400, \texttt{max\_depth} = 16 \\
         \cline{4-5}
         &&&  IW &  \texttt{num\_trees} = 400, \texttt{max\_depth} = 16\\
         \bottomrule
    \end{tabular}
    \caption{Models and hyperparameters used to generate \cref{fig:uplot_multifig}. Asterisks denote the Goodreads, Mooc and Adult setting that appear in \cref{fig:uplot_multifig}, the other settings are shown in \cref{fig:with_and_without_demographics_adult_mooc}.}
    \label{tab:hps_chosen}
\end{table}

\subsection{Hyperparameter Selection}
\label{sec:experiments_hp}

We evaluate hyperparameters for each prediction model using 5-fold cross validation on the training sets (see \cref{sec:datasets_longer_desc}), stratifying folds to maintain group proportions. We evaluate the cross-validation across a sparse grid of $\vec{\alpha} \in [0.02, 0.05, 0.2, 0.5, 0.8, 0.95, 0.98]$, allowing us to determine if hyperparameters should be set differently for different values of $\vec{\alpha}$. 
\cref{tab:hps_chosen} describes the model and parameters which are chosen as a result of this process.

\textbf{Image Datasets.}
For the results shown in \cref{fig:uplot_multifig} with image datasets (CIFAR-4 and ISIC), we train a pretrained resnet-18 by running SGD with momentum for 20 epochs for each dataset. We did not find significant improvements from training for more epochs for either dataset. Sohoni et al.  \yrcite{sohoni2020no} use a pretrained resnset-50 for the ISIC prediction task; we use a resnet-18 since we are mostly working with smaller training set sizes due to subsetting. We did not find major differences in performance for ERM for ISIC between the resnset-18 and resnet-50 for the dataset sizes we considered. We use a resnset-18 for all models for the CIFAR-4 and ISIC tasks.

In the 5-fold cross validation procedure, we search over learning rate in $[0.01, 0.001, 0.0001]$ and weight decay in $[0.01, 0.001, 0.0001]$, keeping the momentum at $ 0.9$ for both the importance sampling and ERM procedures.
Given these results, for GDRO, we search over group-adjustment in $[1.0, 4.0, 8.0]$, gdro step size in $[0.1, 0.01, 0.001]$, and  weight decay in $[0.001, 0.0001]$, fixing momentum at $0.9$ and learning rate at $[0.001]$. 

The optimal hyperparameter configurations for the coarse grid of $\vec{\alpha}$ are in \cref{tab:hps_chosen}.  
Across $\vec{\alpha}$ values from the coarse grid,
either the optimal parameters for each objective were largely consistent, or performance did not vary greatly between hyperparameter configurations for almost all dataset/objective configurations.
As a result, for both the modified CIFAR and ISIC datasets, we keep the hyperparameters fixed across $\vec{\alpha}$, with the exception of IS for CIFAR-4, where we increase weight decay for extreme $\vec{\alpha}$ (see \cref{tab:hps_chosen}).

\textbf{Non-Image Datasets.}
For the Goodreads dataset, we evaluate the following models and parameters: ridge regression model with $\lambda \in [0.01,0.1,1.0,10.0,100.0]$, random forest regressor with splits determined by MSE, and with number of trees and maximum depth of trees $\in  [100,200,400] \times [32,64,128]  $, and a multiclass logistic regression classifier with $\ell_2$ inverse regularization strength parameter $C \in [0.01,0.1,1.0, 10.0]$. 


The multiclass logistic regression model minimized mean absolute error (MAE) over the models we considered. 
For ERM, the optimal regularization parameter was $C = 1.0$ for all  $\vec{\alpha}$ in our sparse grid.
For IW, the optimal regularization value  was $C = 1.0$ for all $\alpha_A$ other than $0.98$, where the optimal for history MAE was $C=10.0$. Since this only effected one group, and was not symmetric across $\vec{\alpha}$, for the evaluation results, we set $C= 1.0$ for all $\vec{\alpha}$.

For the Mooc dataset, we evaluate a binary logistic regression model with $\ell_2$ penalty and inverse regularization parameter $C \in [0.001,0.01,0.1,1.0,10.0]$, a random forest classifier with number of tress and maximum depth of trees $\in [100,200,400] \times [8,16,32]$, and ridge regression model with
\iftoggle{icml}{$\lambda \in [0.0001, 0.001, 0.01, 0.1, 1.0, 10.0]$}{$\lambda \in [0.0001, 0.001, 0.01$, $0.1, 1.0, 10.0]$} and threshold for binary classification decision $0.5$.
The best model across both group and population accuracies was a random forest model; the best maximum depth was $16$ for both ERM and IW, and the results were robust to the number of estimators, so we chose $200$ for both. The best hyperparameters were largely consistent for all $\vec{\alpha}$ in the sparse grid.

For the Adult dataset, we evaluate the same models and parameter configurations as the MOOC dataset. The best model across both ERM and IW was the random forest model, with optimal values given in \cref{tab:hps_chosen}.

\subsection{Navigating Tradeoffs}
Using the selected hyperparameters from the procedure described in \cref{sec:experiments_hp}, we evaluate performance on the heldout test sets (see \cref{sec:datasets_longer_desc}). For the final evaluation, we evaluate
\iftoggle{icml}{$\alpha \in [0.0, 0.01, 0.02, 0.05, 0.1, 0.15, 0.2, 0.25, .3, 0.35, .4, 0.45, 0.5, 0.55, 0.6, 0.65, 0.7, 0.75, 0.8, 0.85, 0.9, 0.95, 0.98, 0.99, 1.0]$}{$\alpha \in [0.0, 0.01, 0.02, 0.05, 0.1$, $0.15, 0.2, 0.25, .3, 0.35, .4, 0.45, 0.5, 0.55, 0.6, 0.65, 0.7, 0.75, 0.8, 0.85, 0.9, 0.95, 0.98, 0.99, 1.0]$}, skipping the extremes for GDRO and IS/IW.

The training set size is determined by the smaller of the training set sizes for each group, denoted as $\min_\group n_\group$ in \cref{tab:datasets}. This results in  $n=10,000$ for the modified CIFAR-4, $n=4,092$ for ISIC, $n=50,000$ for goodreads, $n=3,897$ for Mooc,  and $10,771$ for adult. Note that the number of test set instances \iftoggle{icml}{}{($n_\textrm{test}$ in \cref{tab:datasets})} does not necessarily have the proportions of instances indicated by $\gamma_A$ in  \cref{tab:datasets}. For the CIFAR-4 and Goodreads dataset, the test set is constructed to have $50\%$ instances from group A, and $50\%$ instances from group B; in reporting performance, we take a weighted average over the errors from each group, as in Eq.~\eqref{eq:err_per_group}. For adult, we re-weight the test set instances to reflect $\vec{\gamma} = (0.5,0.5)$. For the remaining two datasets, the test set compositions align with $\vec{\gamma}$.  
We assess variability in the performance of each method under each setting by examining results over 10 random seeds, which apply to both the random sampling in the subsetting procedure and the randomness in the training procedure.

\subsection{Assessing scaling law fit}
\label{sec:scaling_law_fit_details}

\begin{figure}[!ht]
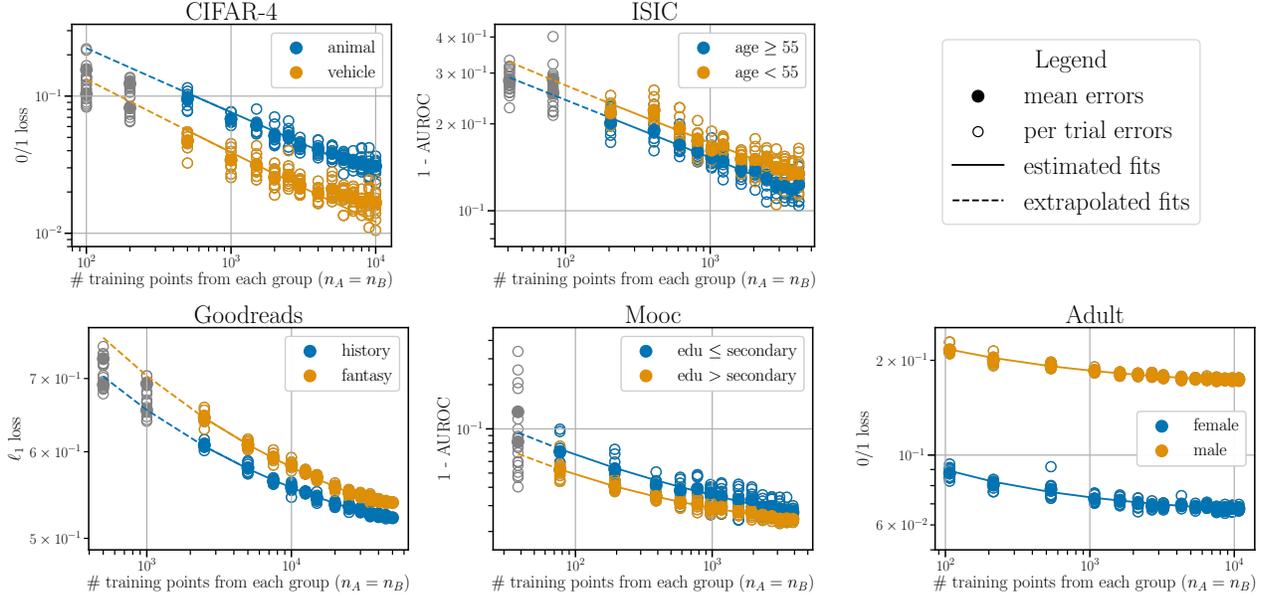

     \begin{subfigure}[b]{0.32\textwidth}
         \centering
 \includegraphics[height=4cm]{figures/exp2:scaling/scaling_cifar4_flipped.pdf}
     \end{subfigure}
      \begin{subfigure}[b]{0.32\textwidth}
         \centering
 \includegraphics[height=4cm]{figures/exp2:scaling/scaling_isic_aucroc_flipped.pdf}
     \end{subfigure}  \\
 \begin{subfigure}[b]{0.32\textwidth}
         \centering
 \includegraphics[height=4cm]{figures/exp2:scaling/scaling_goodreads.pdf}
     \end{subfigure}
      \hfill
      \begin{subfigure}[b]{0.32\textwidth}
         \centering
 \includegraphics[height=4cm]{figures/exp2:scaling/scaling_mooc_with_demographics_small_legend.pdf}
     \end{subfigure}
\hfill
      \begin{subfigure}[b]{0.32\textwidth}
         \centering
 \includegraphics[height=4cm]{figures/exp2:scaling/scaling_adult_no_gender_small_legend.pdf}
     \end{subfigure}
        \caption{Estimated scaling law fits describe observed trends of group errors as a function of $(n_\group, n)$.
        Grey points are not included in the scaling law fit, as $n_g < M_g$ (see \cref{tab:scaling_fits}).
        }
        \label{fig:empirical_scaling}
\end{figure}

In addition to the results on the holdout set in the previous section, where we vary $\vec{\alpha}$ and keep the size of the training set fixed at $n = \min_\group n_\group$, we conduct subsetting experiments to assess the affect of $n$, as well as $n_\group$, on the group accuracies as in Eq.~\eqref{eq:general_scaling_law}. 
Specifically, for groups A and B, we vary the relative number of data points from group A and group B, as well as the total number of datapoints $n$. We define our subsetting grid by subsampling twice for each configuration of $(n_\group,n)$. First, we choose an allocation ratio of group A to group B (from options $[0.125\!:\!1, 0.25\!:\!1, 0.5\!:\!1, 1\!:\!1]$). Then, we subsample again to $x$ fraction of the largest subset size for this allocation ratio, for an  $x \in [0.01, 0.02, 0.05, 0.1, 0.15, 0.2, 0.25, .3, .4,  0.5,  0.6, 0.7, 0.8, 0.9, 1.0]$ (we skip $x \in [0.01]$ for allocation ratios $<1$). We repeat the sampling pattern for all pairs of allocation ratio and $x$, and again switching the roles of groups A and B in this procedure. This results in a set of 99 unique $n_A, n_B$ pairs, which we combine with the 25 pairs from the previous experiment, where we fix $n = n_A + n_B$ (see \cref{fig:scaling_sample_configurations}). We run ten random seeds of $(n_A, n_B)$ configuration, for a total of 1240 points from which we estimate the scaling law fits.

\begin{figure}[!htb]
     \centering
     \includegraphics[width=.4\columnwidth]{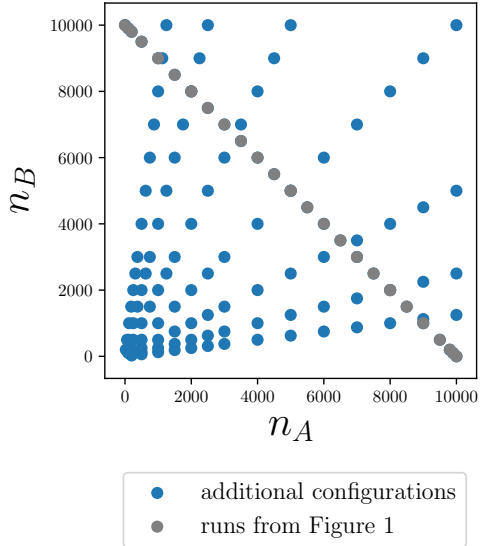}
         \footnotesize 
         \caption{ 
         Sample configurations for $(n_\group, n)$ described in  \cref{sec:scaling_law_fit_details}, shown for the CIFAR-4 dataset, where $\min_\group n_\group = 10,000$. Note that there are two groups, so $n = n_A + n_B$.
        \label{fig:scaling_sample_configurations}}
\end{figure}
Since our main point of comparison is across training set sizes and allocations, rather than optimizing hyperparameters for each new sample size, we use the same hyperparameter configurations as for first set of experiments (ERM rows in \cref{tab:hps_chosen}).
For the neural net classifiers, we decrease the number of epochs when the total training set size $n$ increases, so as to keep nearly the same number of gradient steps per training procedure. Specifically, we set the number of epochs as the nearest integer to {{$\textrm{(\# epochs for first set of experiments)} \times 
\textrm{($n$ for first set of experiments)} /
\textrm{($n$ for current run)}$ }}, where the first set of experiments corresponds to those in \cref{fig:uplot_multifig}. 

Together with the results from the previous experiment shown in~ \cref{fig:uplot_multifig}, we use these values of $(n,n_A,n_B)$ and the accuracies evaluated on groups $A$ and $B$ to estimate the parameters of the scaling laws in Eq.~\eqref{eq:general_scaling_law}. We use the nonlinear least squares implementation of
\href{https://docs.scipy.org/doc/scipy/reference/generated/scipy.optimize.curve\_fit.html}{\texttt{scipy.optimize.curve\_fit}}. The estimated parameters are given in \cref{tab:scaling_fits}. The standard deviations reported in \cref{tab:scaling_fits} are the estimated standard deviations resulting from the nonlinear least squares fits.    \cref{fig:empirical_scaling} plots the fitted model over a subset of the data used to fit it, showing that the modified scaling model in Eq.~\eqref{eq:general_scaling_law} can express the trends in per-group losses as a function of $n_\group$ and $n$.

The parameter estimates sometimes have large standard deviations; we also found the parameter estimates to vary with $M_g$, the minimum number of points required to include a data point in the fitting procedure, as well as the overall subsetting design. When seeking exact and robust estimates of the parameters in Eq.~\eqref{eq:general_scaling_law}, we suggest following the experimental procedures outlined in \cite{clauset2009power}, and additionally accounting for potential variation due to sampling patterns.

\subsection{Pilot Sample Experiment}
\label{sec:pilot_sample_details}
In this experiment, we take a small random sample from the Goodreads training set to instantiate a ``pilot sample" from which to estimate scaling law parameters and suggest an $\hat{\alpha}^*$ at which to sample a larger training sample in a subsequent sampling effort. 
The pilot sample contains $5000$ total instances, with $n_A = n_B = 2500$. 
With the pilot sample, we conduct a series of subsetting pairs for $(n_A, n_B)$, similar to the procedure outlined in \cref{sec:scaling_law_fit_details}, using relative allocations in $[0.0625\!:\!1, 0.125\!:\!1, 0.25\!:\!1, 0.5\!:\!1, 1\!:\!1]$ and $x \in [0.02, 0.05, 0.1, 0.15, 0.2, 0.25, .3, .4,  0.5,  0.6, 0.7, 0.8, 0.9, 1.0]$.
For each subset configuration, we sample 5 random seeds. From these  results, we estimate the scaling parameters of Eq.~\eqref{eq:general_scaling_law} according to the performance of each subset configuration on the heldout evaluation set. 
We set the minimum number of points from which to fit the scaling parameters to $M_g = 250$.

Next, we use these estimated fits to extrapolate predicted per-group losses with more data points. %
For a given sample size, we suggest the $\hat{\alpha}^*$ that would minimize the the maximum of the estimated per-group losses. 
We use the training data held separate from the pilot sample to sample a training set at $\vec{\alpha} = \hat{\alpha}^*$  allocation, and evaluate performance on the original held out evaluation set.
We follow the same procedure, sampling the new training set from $\vec{\alpha} = \vec{\gamma}$ and at $\vec{\alpha} = (0.5,0.5)$. 

\newtext{
As a point of comparison we also compute the results for all $\alpha$ in a grid of resolution $0.01$; i.e., $\alpha \in [0.0,0.01,0.02, \ldots 0.99,1.0]$ and denote the allocation value in this grid that minimizes the average maximum group loss over trials as $\alpha^*_{\textrm{grid}}$. The best allocation possible (up to the 0.01 grid resolution) were: $\alpha^*_{\textrm{grid}}= $ 0.01 for $n_{\textrm{new}}=5000$, $\alpha^*_{\textrm{grid}}= $ 0.06 for $n_{\textrm{new}}=10000$, $\alpha^*_{\textrm{grid}}= $ 0.03 for $n_{\textrm{new}}=20000$, and $\alpha^*_{\textrm{grid}}= $ 0.07 for $n_{\textrm{new}}=40000$. 
The $\alpha^*_\textrm{grid}$ baseline helps contextualize performance of other allocation strategies with the optimal-in-hindsight $\alpha^*_{\textrm{minmax}}$. Furthermore, the variability across trials due to subsetting around $\alpha = \alpha^*_\textrm{grid}$ is largely consistent with that of the other allocation sampling strategies examined.}

We repeat this entire procedure (starting with generation of the pilot sample) for 10 random seeds, and the results are reported in \cref{fig:pilot_results_goodread}.
 Since we have enough training data in the Goodreads dataset outside of the pilot sample to simulate gathering larger datasets of up to $40,000$, we simulate collecting a new training dataset of size $n_{new} \in [5000, 10000, 20000, 40000]$, which are $[1\times, 2\times, 4\times, 8\times]$ the size of the pilot training dataset, respectively.

The error bars in \cref{fig:pilot_results_goodread_all} are similarly large for all three allocation strategies we compare. The allocation $\vec{\alpha} = \hat{\alpha}^*_{\textrm{minmax}}$ when $n_{\textrm{new}} = 40000$ is an exception, indicating that high variation in $\hat{\alpha}^*_{\textrm{minmax}}$ may be an issue when $n_{\textrm{new}}$ is large relative to the pilot training sample size.

\subsection{Interactions Between Groups}
\label{sec:group_interactions_details}

While the main experiments exclude the SONIC study from the ISIC dataset, consistent with the `non-patch' dataset in \cite{sohoni2020no}, this experiment utilizes the larger corpus of labeled images from the ISIC repository. The difference is the inclusion of the SONIC sub-study. This larger dataset has 21,203 instances after subsetting to precisely benign/malignant cases. The number of data instances coming from each dataset are given by the grey bars (black annotated numbers) in \cref{fig:isic_substudies}; purple bars denote the number of malignant instances. Note the logarithmic scale.

We separate 16,965 images to the training set and the remaining 4,238 to the test set, with equal ratios of each sub-study represented in the train and test sets. 
\iftoggle{icml}{
\newtext{The train and test splits differ for the study and sub-study analysis, accounting for differing values in corresponding cells of \cref{fig:isic_loo_studies_appendix} and \cref{fig:isic_loo_groups_appendix}, particularly apparent when evaluating on JID Editorial Images, the study with the smallest number of data instances (see \cref{fig:isic_substudies}).}}
{
The train and test splits differ for the study and sub-study analysis, accounting for differing values in corresponding cells of \cref{fig:isic_loo_studies} and \cref{fig:isic_loo_substudies}, particularly apparent when evaluating on JID Editorial Images, the study with the smallest number of data instances (see \cref{fig:isic_substudies}).}
}
We choose hyperparameters based off a 5 fold cross-validation procedure on the 16,965 training instances, searching over the same hyperparameter options as in \cref{sec:experiments_hp}, ultimately using 20 epochs, momentum 0.9, weight decay 0.001 and learning rate 0.001 to tune the resnet-18 model (with ERM objective).
For each of the 5 studies, and 10 sub-studies that comprise the larger datasets, we retrain the resnet-18 model on the training set with that sub-study excluded from the training data, and compare to performance of the model trained on the entire training set.  As in \cref{sec:scaling_law_fit_details}, we keep the number of gradient steps for each training process roughly equal. We run 10 random seeds for all conditions and report the differences in the mean performance metrics across groups.
\iftoggle{icml}
{ \cref{fig:isic_loo_studies_appendix} (same as \cref{fig:isic_loo_studies}) shows the results of subsetting by each of the five studies; \cref{fig:isic_loo_substudies} shows the results of subsetting by each of the ten sub-studies. \newtext{We note that when the evaluation set is very small (as for the 2018 JID Ed. study), there can be high variation in estimated accuracy values due to randomization in the train/evaluation splits, highlighting the need for adequate representation across groups in evaluation data as well as in training data.}}
{}

\begin{figure}[!ht]
    \centering
         \includegraphics[width=.5\textwidth]{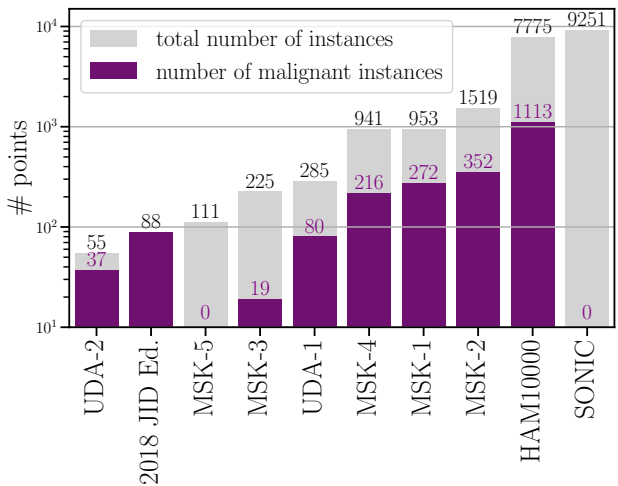}
         \caption{Sub-studies that comprise the ISIC dataset. The SONIC study  is excluded in the main analysis.}
         \label{fig:isic_substudies}
\end{figure}

\iftoggle{icml}
{{\begin{figure}[!thb]
     \centering
     \begin{subfigure}[b]{0.4\textwidth}
        \centering
         \includegraphics[width=\columnwidth]{figures/exp5:isic_loo/isic_loo_datasets_aggregated.pdf}
        \caption{Grouped by study. \label{fig:isic_loo_studies_appendix}}
         \end{subfigure}
         ~
     \begin{subfigure}[b]{0.4\textwidth}
        \centering
         \includegraphics[width=\textwidth]{figures/exp5:isic_loo/isic_loo_datasets.pdf}
         \caption{Grouped by sub-study.}
         \label{fig:isic_loo_substudies}
         \end{subfigure}
         \caption{
         \label{fig:isic_loo_groups_appendix}
         Performance changes due to holding out a study from the training set. Studies are ordered by $\%$ malignancy of the data in the evaluation set. SONIC and MSK-5 contain all benign instances and the 2018 JID Editorial Images dataset contains all malignant instances; for these we report $\%$ change in binary accuracy. For the remaining groups, we report $\%$ change in AUROC. \newtext{Note that the random training/test splits differ between (a) and (b), accounting for the differences in values for corresponding cells between the two figures.}}
\end{figure}}}
{
}
\label{sec:experimental_details}

\section{Supplementary Experiments}
\setcounter{figure}{0}   
In this appendix we detail additional experiments to supplement the findings of \cref{sec:experiments}. These experiments investigate the robustness of our findings to different problem formulations and data pre-processing.

\subsection{The Effects of Including Groups as Features or Not}
\label{sec:groups_as_features}
Here we compare models that use group or demographic information as features, and models that do not. We examine differences in group performance across $\vec{\alpha}$ and the effect of importance weighting  in both cases.

\begin{table}[]
\renewcommand{\arraystretch}{1.2}
    \centering
    \footnotesize
    \begin{tabular}{c|c|c|c|c|c|c|c}
    \toprule
        dataset & 
         $M_g$ &group $\group$ & $\hat{\sigma}_\group$ & $\hat{p}_\group$ & $\hat{\tau}_\group$ & $\hat{q}_\group$ & $\hat{\delta}_\group$  \\ \hline
 \multirow{2}{*}{\shortstack[1]{Mooc (with \\ dem. features) }} & \multirow{2}{*}{50} & edu $\leq 2^{\circ}$ & 0.08 (2.6e-05) & 0.14 (6.0e-03) & 0.73 (0.059) & 0.63 (4.8e-03) & 1.3e-15 (2.6e-04) \\ & & edu $> 2^{\circ}$ & 0.038 (6.2e-04) & 0.068 (6.3e-03) & 0.54 (6.5e-03) & 0.61 (9.8e-04) & 2.8e-12 (8.0e-04)\\
 \hline
 \multirow{2}{*}{\shortstack[1]{Mooc (without \\ dem. features) }} & \multirow{2}{*}{50} & edu $\leq 2^{\circ}$ & 0.41 (0.86) & 1.0 (0.26) & 0.6 (0.028) & 0.56 (3.5e-03) & 0.029 (1.4e-06) \\ & & edu $> 2^{\circ}$ & 0.029 (1.3e-03) & 0.055 (0.011) & 0.33 (2.2e-03) & 0.54 (9.5e-04) & 1.9e-14 (1.5e-03) \\
   \hline
   \multirow{2}{*}{\shortstack[1]{Adult (with \\ group features) } } & \multirow{2}{*}{50} & female & 0.036 (3.2e-06) & 0.14 (2.5e-03) & 0.23 (3.3e-03) & 0.47 (2.3e-03) & 0.055 (2.1e-05) \\ & & male & 0.12 (6.8e-05) & 0.24 (5.8e-04) & 0.3 (6.2e-03) & 0.47 (2.7e-03) & 0.16 (3.5e-06) \\
   \hline
   \multirow{2}{*}{\shortstack[1]{Adult (without \\ group features) } }  & \multirow{2}{*}{50} & female & 0.078 (0.051) & 0.018 (3.6e-03) & 0.43 (8.3e-03) & 0.59 (1.6e-03) & 8.0e-16 (0.052) \\ & & male & 0.066 (2.6e-05) & 0.21 (1.2e-03) & 0.47 (6.5e-03) & 0.50 (1.1e-03) & 0.16 (5.4e-06) \\
   \bottomrule
    \end{tabular}
    \caption{Estimated scaling parameters for eq.~\eqref{eq:general_scaling_law} with and without demographic features included. Parentheses denote standard deviations estimated by the nonlinear least squares fit. Parameters are constrained so that  $\hat{\tau}_\group, \hat{\sigma}_\group \geq 0$ and $\hat{p}_\group, \hat{q}_\group \in [0,2]$.}
    \label{tab:scaling_fits_without_demographic_features}
\end{table}

\begin{figure}[!ht]
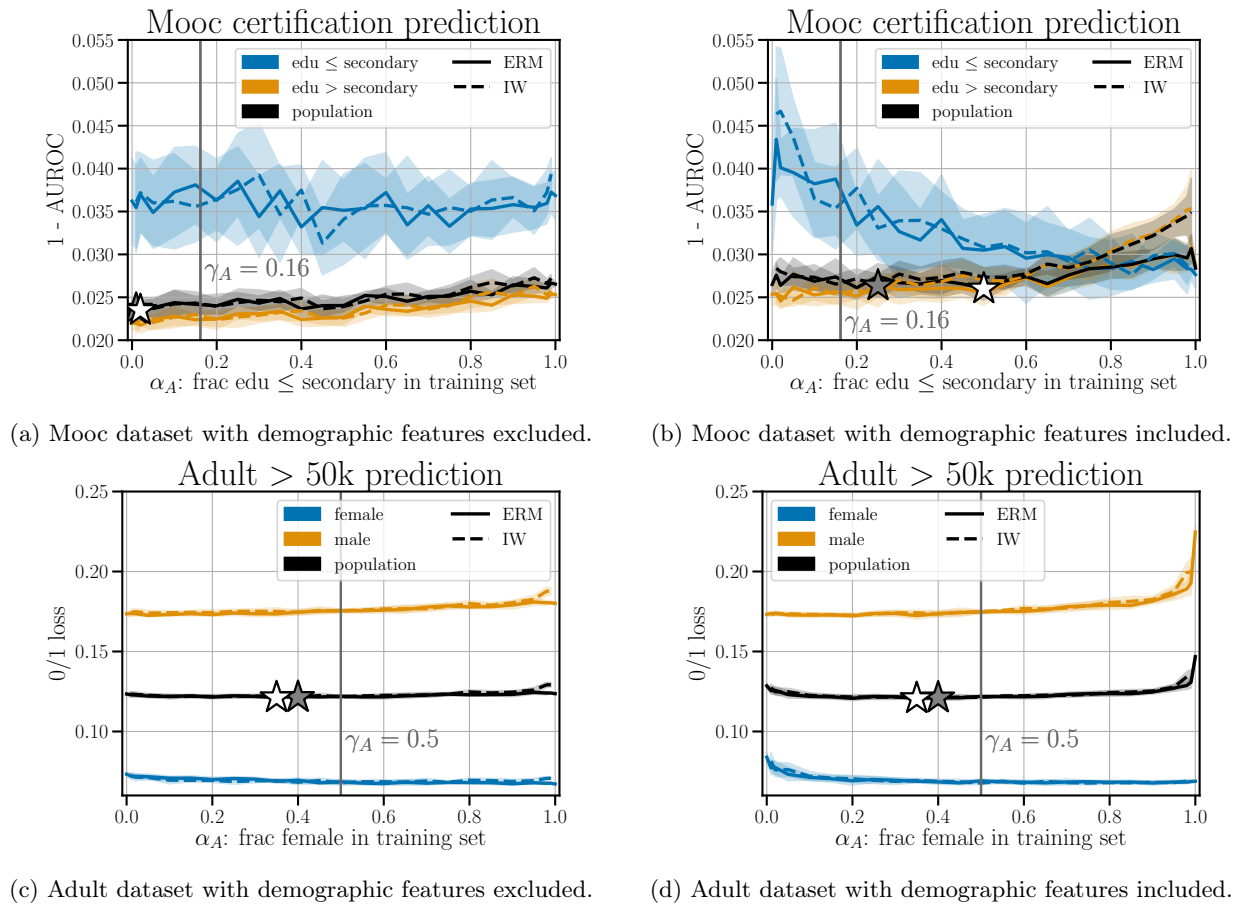

     \centering
     \begin{subfigure}[b]{0.48\textwidth}
         \centering
 \includegraphics[width=.9\textwidth]{figures/exp1:uplot/uplot_mooc_no_demographics_auc.pdf}
 \caption{Mooc dataset with demographic features excluded.
 \label{fig:mooc_without_demographics}}
     \end{subfigure}
     ~~~
     \begin{subfigure}[b]{0.48\textwidth}
         \centering
  \includegraphics[width=.9\textwidth]{figures/exp1:uplot/uplot_mooc_with_demographics_auc.pdf}
 \caption{Mooc dataset with demographic features included.
 \label{fig:mooc_with_demographics}}
     \end{subfigure}
     \\
          \begin{subfigure}[b]{.48\textwidth}
         \centering
 \includegraphics[width=.9\textwidth]{figures/exp1:uplot/uplot_adult_acc_no_gender.pdf}
 \caption{Adult dataset with demographic features excluded.
 \label{fig:adult_without_demographics}}
     \end{subfigure}
     ~~~
     \begin{subfigure}[b]{0.48\textwidth}
         \centering
  \includegraphics[width=.9\textwidth]{figures/exp1:uplot/uplot_adult_acc_with_gender.pdf}
 \caption{Adult dataset with demographic features included.
  \label{fig:adult_with_demographics}}
     \end{subfigure}
       \caption{Comparison of phenomena with group information included or excluded from training set. 
        \label{fig:with_and_without_demographics_adult_mooc}}
\end{figure}

For Mooc, we compare the model used in the main analysis  with removing all demographic information including education level, which we group data instances by. There are five remaining features: number of active days, number of video plays, number of forum posts, and number of events, and course-id, which we one-hot encode. The random forest model is still the best performing of those we considered, though the optimal hyperparameters after a 5 fold cross validation search shifted to $\texttt{num\_trees} = 400$ and $\texttt{max\_depth} = 8$. 
For the Adult dataset, we add back in the features for sex, wife, and husband. This results in 13 unique features, and after one-hot encoding, 103 feature columns. After running a 5 fold cross validation search with the new feature matrix, the optimal model and parameters were $\texttt{num\_trees} = 400$ and $\texttt{max\_depth} = 16$. 

\cref{fig:with_and_without_demographics_adult_mooc} compares the results of after excluding (left column) and including (right column) this information in the training feature matrix. Models with the group features excluded generally exhibit less degradation in performance for $\alpha_A$ near 0 or 1. The negative impacts of IW are lessened slightly when the models do not incorporate groups as features.

The different scaling law fits are shown in \cref{tab:scaling_fits_without_demographic_features}. For the Adult task without group features, the estimated exponent and scaling constant on group-agnostic data, $\hat{q}_\group$ and $\hat{\tau}_\group$, are larger than than for the model with demographic features. For the Mooc task, the effect is the reverse. In all settings, the fitted parameters show that $n$ influences group errors more than $n_\group$, so long as $n_\group$ is at least $M_\group = 50$ (the region for which we estimate the fit).

In the main analysis, we include demographic features for the Mooc certification prediction task, as this increases accuracy of the ``edu $\leq$ secondary group," and exclude gender features from the Adult income prediction, as it does not not greatly impact performance for $\alpha_A$ near 0.5, and improves performance for more extreme settings of $\alpha_A$ (\cref{fig:with_and_without_demographics_adult_mooc}). For the other three datasets we study, there is no singular feature corresponding to the group that we can hold out in a similar fashion.

\subsection{Trends across models and performance metrics}
\label{sec:additional_model_results}
Figure~\ref{fig:goodreads_mae_and_mse} contrasts the trade offs we can navigate with different allocations and re-weighting schemes for different models and  accuracy metrics. 
The left hand side of \cref{fig:goodreads_mae_and_mse} shows the result of the model fitting and evaluation procedure to minimize $\ell_1$ loss (MAE) of the Goodreads predictions (same as the results shown in \cref{fig:uplot_multifig}). The right hand side shows the result of the same model fitting and evaluation procedure applied to minimizing the $\ell_2$ loss (MSE), where ridge regression is the best performing model of those we consider. The ridge model evaluated with MSE exhibits similar trends across $\vec{\alpha}$ values as the multiclass logistic regression model evaluated on MAE. The value $\alpha^*_A$ that minimizes population loss is similar for both methods -- near to $\gamma_A$, though on opposite sides of $\gamma_A$. The degradation in population loss for $\alpha_A$ close to 0 or 1 is also apparent for the ridge model, and the range of $\vec{\alpha}$ for which importance weighting (IW) increases per-group and population losses compared to empirical risk minimization (ERM) is similar across models.

\begin{figure}[!ht]
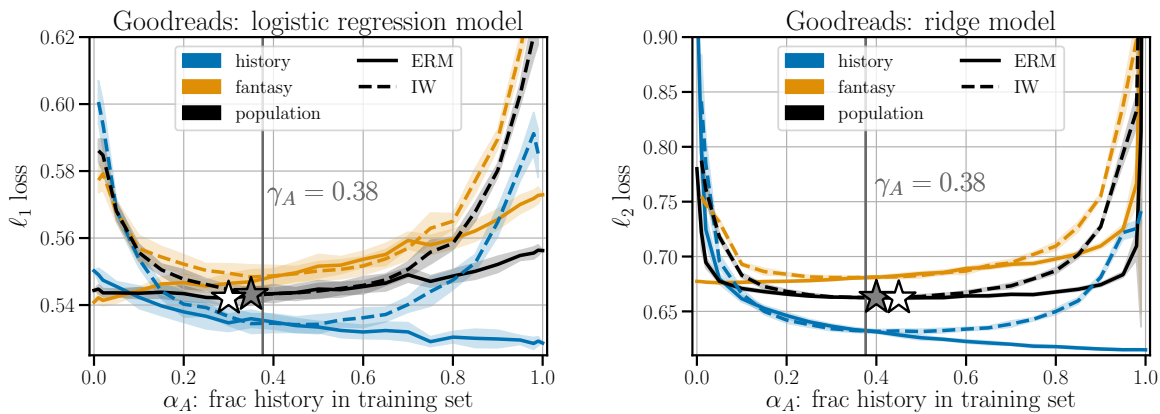

     \centering
     \begin{subfigure}[b]{0.45\textwidth}
         \centering
 \includegraphics[width=\textwidth]{figures/exp1:uplot/uplot_goodreads_mae.pdf}
     \end{subfigure}
     \hspace{1em}
     \begin{subfigure}[b]{0.45\textwidth}
         \centering
 \includegraphics[width=\textwidth]{figures/exp1:uplot/uplot_goodreads_mse.pdf}
     \end{subfigure}
      \hfill
         \caption{Comparison of phenomena for different accuracy metrics and models on the Goodreads dataset.}
        \label{fig:goodreads_mae_and_mse}
\end{figure}
\label{sec:appendix_experiments}

\end{document}